    \let\Cref\crtCref
    \let\cref\crtcref
\setlist[enumerate]{leftmargin=.5in}
\setlist[itemize]{leftmargin=.5in}
\crefname{hypothesis}{Hypothesis}{Hypotheses}
\title{Function-Space Optimality of Neural Architectures with Multivariate Nonlinearities\thanks{\vspace{-1em}
\funding{This work was supported by the Swiss National Science Foundation under Grant 200020\_219356 / 1.}}}
\author{Rahul Parhi\thanks{Department of Electrical and Computer Engineering, University of California, San Diego, La Jolla, CA 92093, USA (\email{rahul@ucsd.edu}). Part of this work was done while the author was with the Biomedical Imaging Group, \'Ecole polytechnique f\'ed\'erale de Lausanne, CH-1015 Lausanne, Switzerland.} \and Michael Unser\thanks{Biomedical Imaging Group, \'Ecole polytechnique f\'ed\'erale de Lausanne, CH-1015 Lausanne, Switzerland
(\email{michael.unser@epfl.ch}).}}
\begin{document}
\maketitle

\begin{abstract}
    We investigate the function-space optimality (specifically, the Banach-space optimality) of a large class of shallow neural architectures with multivariate nonlinearities/activation functions. To that end, we construct a new family of Banach spaces defined via a regularization operator, the $k$-plane transform, and a sparsity-promoting norm. We prove a representer theorem that states that the solution sets to learning problems posed over these Banach spaces are completely characterized by neural architectures with multivariate nonlinearities. These optimal architectures have skip connections and are tightly connected to orthogonal weight normalization and multi-index models, both of which have received recent interest in the neural network community. Our framework is compatible with a number of classical nonlinearities including the rectified linear unit (ReLU) activation function, the norm activation function, and the radial basis functions found in the theory of thin-plate/polyharmonic splines. We also show that the underlying spaces are special instances of reproducing kernel Banach spaces and variation spaces. Our results shed light on the regularity of functions learned by neural networks trained on data, particularly with multivariate nonlinearities, and provide new theoretical motivation for several architectural choices found in practice.

\end{abstract}

\begin{keywords}
    multi-index models,
    multivariate nonlinearities,
    neural networks,
    regularization,
    representer theorem
\end{keywords}

\begin{AMS}
  46E27, 47A52, 68T05, 82C32, 94A12
\end{AMS}

\section{Introduction}
In supervised machine learning, the goal is to predict an output $y \in \Y$ (e.g., a label or response) from an input $\vec{x} \in \X$ (e.g., a feature or example), where $\X$ and $\Y$ denote the domain of the inputs and outputs, respectively. One solves this task by ``training'' a model to fit a set of data which consists of a finite number of input-output pairs $\curly{(\vec{x}_m, y_m)}_{m=1}^M \subset \X \times \Y$. The goal is to ``learn'' a function $f: \X \to \Y$ with $f(\vec{x}_m) \approx y_m$, $m = 1, \ldots, M$, such that $f$ can accurately predict the output $y \in \Y$ of a new input $\vec{x} \in \X$. This task is usually formulated as an optimization problem of the form
\begin{equation}
    \min_{f \in \F} \: \sum_{m=1}^M \Loss(y_m, f(\vec{x}_m)) + \lambda \Phi(f),
    \label{eq:learning-problem}
\end{equation}
where $\F$ is a prescribed \emph{model class} of functions that map $\X \to \Y$, $\Loss(\dummy, \dummy)$ is a \emph{loss function}, and $\Phi: \F \to \R_{\geq 0}$ is a \emph{regularization functional} that injects prior knowledge/regularity on the function to be learned. The hyperparameter $\lambda > 0$ controls the tradeoff between data fidelity and regularity. Without the inclusion of the regularization functional in \cref{eq:learning-problem}, the problem is typically ill-posed. Indeed, in many practical scenarios the problem is \emph{overparameterized} as the dimension of the model class $\F$ greatly exceeds the number $M$ of data. A classical choice of model class is a reproducing kernel Hilbert space (RKHS). The accompanying regularization functional is the squared Hilbert norm of the RKHS. In this scenario, the RKHS representer theorem establishes that there exists a solution to \cref{eq:learning-problem} that takes the form of a linear combination of reproducing kernels centered at the data sites~\cite{deBoorSmoothingSplines,WahbaSmoothingSplines3,ScholkopfKernels,wahba1990spline,WendlandBook}. This provides an exact characterization of the function-space optimality of kernel methods.

Recently, there has been a line of work that investigates the function-space optimality of neural networks~\cite{Bach,BartolucciRKBS,OngieRadon,parhi2020role,ParhiShallowRepresenter,ParhiDeepRepresenter,ParhiMinimax,SavareseInfiniteWidth,SpekRKBS,unser2022kernel}. Crucially, these works define and study (non-Hilbertian) Banach spaces defined by sparsity or variation. These spaces have an analytic description via the Radon transform~\cite{KurkovaEstimates,OngieRadon,ParhiShallowRepresenter}. The accompanying neural network representer theorems for these spaces were first established in~\cite{ParhiShallowRepresenter} and then studied and refined by a number of authors~\cite{BartolucciRKBS,ParhiDeepRepresenter,SpekRKBS,unser2022kernel}. While these results characterize the function-space optimality of neural networks, they only consider univariate nonlinearities. We refer the reader to the recent survey~\cite{ParhiSPM} for an up-to-date summary of this research direction. The purpose of this paper is to further extend the existing results on the function-space optimality of neural architectures, with a particular focus on \emph{multivariate nonlinearities}, which have gained recent interest in the neural network community~\cite{anil2019sorting,goodfellow2013maxout,gulcehre2014learned,mhaskar2023approximation}.

The form of a neuron with an $m$-variate nonlinearity, $1 \leq m \leq d$, is
\begin{equation}
    \vec{x} \mapsto \rho(\mat{A}\vec{x} - \vec{t}), \quad \vec{x} \in \R^d,
    \label{eq:atom}
\end{equation}
where $\rho: \R^m \to \R$ is the nonlinearity (or activation function), $\mat{A} \in \R^{m \times d}$ is a weight matrix that controls the orientation of the neuron, and $\vec{t} \in \R^m$ is a bias which controls the offset of neuron. When $m = 1$,  these atoms can be written as
\begin{equation}
    \vec{x} \mapsto \rho(\vec{\alpha}^\T\vec{x} - t), \quad \vec{x} \in \R^d,
\end{equation}
with $\rho: \R \to \R$, $\vec{\alpha} \in \R^d$, and $t \in \R$, which is the form of a classical neuron with a univariate nonlinearity. Neurons with $m$-variate nonlinearities as in \cref{eq:atom} have been studied under many different names including \emph{$m$-sparse functions}~\cite{abbe2022merged,ghorbani2020neural}, \emph{generalized ridge functions}~\cite{keiper2019approximation}, \emph{$(d - m)$-plane ridge functions}~\cite{Parhikplane}, and \emph{multi-index models}~\cite{cohen2012capturing,dalalyan2008new,fukumizu2004dimensionality,li1991sliced,liu2024learning}. Notably, multi-index models have gained recent interest from the neural network community~\cite{abbe2022merged,Bach,ghorbani2020neural,parkinson2023linear}. A shallow neural architecture with an $m$-variate nonlinearity $\rho: \R^m \to \R$ takes the form
\begin{equation}
    \vec{x} \mapsto \sum_{n=1}^N v_n \, \rho(\mat{A}_n\vec{x} - \vec{t}_n), \quad \vec{x} \in \R^d,
    \label{eq:translation-network}
\end{equation}
where, for $n = 1, \ldots, N$, $v_n \in \R$, $\mat{A}_n \in \R^{m \times d}$, and $\vec{t}_n \in \R^m$. Such architectures are sometimes called \emph{generalized translation networks}~\cite{mhaskar2023approximation,MHASKAR1992350,mhaskar1995degree} and are classically known to be universal approximators if and only if\footnote{This equivalence holds under the global assumption that $\rho: \R^m \to \R$ does not grow faster than a polynomial, i.e., it is a \emph{tempered} function.} $\rho: \R^m \to \R$ is not a polynomial~\cite[Corollary~3.3]{MHASKAR1992350}. These architectures have also recently been studied in the context of ridgelet analysis for a variety of shallow neural architectures~\cite{sonoda2024unified}. In this paper, we characterize, for all integers $m$ with $1 \leq m \leq d$, the function-space optimality of neural architectures with $m$-variate nonlinearities of the form \cref{eq:translation-network}, for a large class of nonlinearities. We show that these architectures are optimal solutions to data-fitting problems posed over (non-Hilbertian) Banach spaces defined via a sparsity-promoting norm in the domain of the $k$-plane transform. When $m = 1$, our framework is compatible with univariate nonlinearities and classical neural architectures, including the ReLU activation function. At the opposite extreme ($m = d$) our framework encompasses sparse kernel expansions and radial basis functions~\cite{AziznejadSparse,rosset2007l1,steinwart2003sparseness}. To the best of our knowledge, the results for $1 < m < d$ are new.

\subsection{Main Contributions and Road Map}

Our results shed light on the regularity of the functions learned by neural networks trained on data. They provide new theoretical motivation for several architectural choices often found in practice, particularly with multivariate nonlinearities. These results hinge on recent developments regarding the distributional extension and invertibility of the $k$-plane transform and its dual~\cite{Parhikplane}. The main contributions and organization of this paper are summarized in the remainder of this section.

\paragraph{New Neural Network Banach Spaces} We propose and study the properties of a new family of \emph{native spaces}, defined by
    \begin{equation}
        \M_{\LOp}^k(\R^d) \coloneqq \curly*{f: \R^d \to \R \text{ is measurable\footnotemark} \st \begin{aligned}
        &\norm{\KOp_{d-k}\RadonOp_k\LOp f}_\M < \infty, \\ 
        &\esssup_{\vec{x} \in \R^d} \: \abs{f(\vec{x})} (1 + \norm{\vec{x}}_2)^{-n_{\LOp}} < \infty
        \end{aligned}} \subset \Sch'(\R^d),
        \label{eq:native-space-intro}
    \end{equation}
    \footnotetext{We refer to a function as \emph{measurable} when it is measurable with respect to the Lebesgue $\sigma$-algebra.}%
    where $k$ is an integer such that $0 \leq k < d$, $\Sch'(\R^d)$ denotes the
    space of tempered distributions, $\LOp$ is a \emph{$k$-plane-admissible}
    pseudodifferential operator (in the sense of \cref{defn:admissible}),
    $\RadonOp_k$ denotes the $k$-plane transform, and $\KOp_{d-k}$ is the
    filtering operator of computed tomography which is such that $\RadonOp_k^*
    \KOp_{d-k} \RadonOp_k = \Id$. The $\M$-norm denotes the total variation norm
    (in the sense of measures). It can be viewed as a ``generalization'' of the $L^1$-norm that can also be applied to distributions such as the Dirac impulse. Said differently, if $f \in \M_{\LOp}^k(\R^d)$ is such that $\KOp_{d-k}\RadonOp_k\LOp f$ is a \emph{bona fide} function (not a distribution), then
    \begin{equation}
        \norm{\KOp_{d-k}\RadonOp_k\LOp f}_\M = \norm{\KOp_{d-k}\RadonOp_k\LOp f}_{L^1}.
    \end{equation}
    The growth restriction of degree $n_{\LOp}$
    plays the role of a proxy to the order of $\LOp$; more specifically,
    $n_{\LOp}$ is the highest polynomial degree annihilated by $\LOp$. The growth
    restriction in the definition of the native space ensures that the null
    space of the operator $\KOp_{d-k}\RadonOp_k\LOp$ is finite-dimensional. In \cref{sec:native-spaces}, we prove that, when equipped with an appropriate direct-sum topology, $\M_{\LOp}^k(\R^d)$ forms a Banach space that is isometrically isomorphic to the Cartesian product of a space of (Radon) measures with the space of polynomials of degree at most $n_{\LOp}$. They add to the growing list of ``neural Banach spaces'' that are currently being actively investigated~\cite{siegel2023characterization}.
    
    \paragraph{Representer Theorems for Neural Networks with Multivariate Nonlinearities} We prove a representer theorem (\cref{thm:representer}) that states that, under mild assumptions on the loss function and $\LOp$, the solution set to the optimization
      problem
    \begin{equation}
        \min_{f \in \M_{\LOp}^k(\R^d)} \: \sum_{m=1}^M \Loss(y_m, f(\vec{x}_m)) + \lambda \norm{\KOp_{d-k}\RadonOp_k\LOp f}_\M
        \label{eq:variational-problem-intro}
    \end{equation}
    is completely characterized by shallow neural architectures with
    $(d-k)$-variate activation functions matched to the operator $\LOp$ and
    widths bounded by the number $M$ of data (independent of the dimension $d$ of
    the data). This result sheds light on the role of biases, skip
    connections, and the use of structured weight matrices in neural
    architectures. Indeed, these architectures take the form
    \begin{equation}
      \vec{x} \mapsto c(\vec{x}) + \sum_{n=1}^N v_n \, \rho_{\LOp}(\mat{A}_n\vec{x} -
      \vec{t}_n),
      \label{eq:solution-form-intro}
    \end{equation}
    with $N \leq M$, where, for $n = 1, \ldots, N$, $\mat{A}_n \in \R^{(d-k) \times d}$ is such that $\mat{A}_n\mat{A}_n^\T = \mat{I}_{d-k}$ (identity matrix), $\vec{t}_n \in \R^{d-k}$, and $v_n \in \R \setminus \curly{0}$. The function $c$ is a polynomial of degree at most $n_{\LOp}$ and the function $\rho_{\LOp}: \R^{d-k} \to \R$ is a $(d-k)$-variate nonlinearity matched to the operator $\LOp$. Finally, the regularization cost of \cref{eq:solution-form-intro} is $\sum_{n=1}^N \abs{v_n} = \norm{\vec{v}}_1$.
    The term $\vec{x} \mapsto c(\vec{x})$ that appears in \cref{eq:solution-form-intro} can be viewed as a (generalized) skip connection in neural network parlance. Note that \cref{eq:solution-form-intro} is exactly a sparse combination of multi-index models with learnable orientations $\mat{A}_n$, offsets $\vec{t}_n$, and fixed profiles specified by the multivariate nonlinearity $\rho_{\LOp}$ as well as a generalized translation network as in \cref{eq:translation-network}. Thus, if the data lied on a low-dimensional subspace (or union of subspaces), the neural architecture could automatically adapt to this structure and avoid the curse of dimensionality. 

    \paragraph{Connections to Prior Work}
    In \cref{sec:examples}, we instantiate our results on the function-space optimality of neural architectures. First, we discuss implications of our representer theorem to the training and regularization of neural networks. These results provide new insight into the role of overparameterization and the use of \emph{orthogonal weight normalization} in network architectures, which corresponds to the property that $\mat{A}_n\mat{A}_n^\T = \mat{I}_{d-k}$ in \cref{eq:solution-form-intro}~\cite{anil2019sorting,huang2018orthogonal,huang2023normalization,li2019preventing}. This property has been shown to increase the stability~\cite{anil2019sorting} and generalization properties~\cite{huang2023normalization} of neural architectures. We then discuss specific examples of neural architectures that are compatible with our framework. These architectures include the popular ReLU~\cite{relu-sparse}, the norm activation function/nonlinearity~\cite{gulcehre2014learned}, and the radial basis functions found in the theory of thin-plate/polyharmonic splines~\cite{duchon1977splines,WendlandBook}. In particular, our theory provides a way to interpolate between the completely anisotropic atoms found in neural architectures with univariate nonlinearities ($k = (d-1)$) to the completely isotropic atoms found in the theory of sparse kernel expansions and radial basis functions ($k = 0$) in a similar vein to how $\alpha$-molecules interpolate between ridgelets (anisotropic) and wavelets (isotropic)~\cite{grohs2016alpha}.
    
    In \cref{sec:RKBS-variation}, we discuss how the native space $\M_{\LOp}^k(\R^d)$ can be viewed as an example of a reproducing kernel Banach space (RKBS)~\cite{BartolucciRKBS,lin2022reproducing,SpekRKBS,zhang2009reproducing} as well as an example of a variation space~\cite{Bach,devore2023weighted,kurkova2001bounds,mhaskar2004tractability,shenouda2023vector,SXSharp,siegel2023characterization}. These are classical approaches used for the understanding of neural networks through the lens of functional analysis and approximation theory. Thus, any abstract result for RKBSs or variation spaces immediately applies to $\M_{\LOp}^k(\R^d)$.

\section{Mathematical Preliminaries and Notation}

The Schwartz space of smooth and rapidly decreasing functions on $\R^d$ is denoted by $\Sch(\R^d)$. Its continuous dual is the space $\Sch'(\R^d)$ of tempered distributions. We let $L^p(\R^d)$ denote the Lebesgue space for $1 \leq p \leq \infty$ and define the weighted $L^\infty$-space
\begin{equation}
  L^\infty_{-\alpha}(\R^d) \coloneqq \curly*{f: \R^d \to \R \text{ is measurable} \st \norm{f}_{L^\infty_{-\alpha}} \coloneqq \esssup_{\vec{x} \in \R^d} \: \abs{f(\vec{x})} (1 + \norm{\vec{x}}_2)^{-\alpha} < \infty}.
\end{equation}
This is the space of growth-restricted functions with rate $\alpha \in \R$. It is a Banach space that can be identified as the continuous dual of the weighted $L^1$-space
\begin{equation}
    L^1_{\alpha}(\R^d) \coloneqq \curly*{f: \R^d \to \R \text{ is measurable} \st \norm{f}_{L^1_{\alpha}} \coloneqq \int_{\R^d} \abs{f(\vec{x})} (1 + \norm{\vec{x}}_2)^{\alpha} \dd\vec{x} < \infty}.
\end{equation}

The Banach space of continuous functions vanishing at $\pm \infty$ on $\R^d$ equipped with the $L^\infty$-norm is denoted by $C_0(\R^d)$. By the Riesz--Markov--Kakutani representation theorem~\cite[Chapter~7]{FollandRA}, its continuous dual can be identified with the Banach space of finite Radon measures, denoted $\M(\R^d)$. Since $\Sch(\R^d)$ is dense in $C_0(\R^d)$, we have, by duality, that $\M(\R^d)$ is continuously embedded in $\Sch'(\R^d)$.  Given a space $\X$ and a norm $\norm{\dummy}$, the completion of $\X$ in $\norm{\dummy}$ is a Banach space, denoted by $\cl{(\X, \norm{\dummy})}$. For example, we have, for $1\leq p < \infty$, that $L^p(\R^d) = \cl{(\Sch(\R^d), \norm{\dummy}_{L^p})}$, and $C_0(\R^d) = \cl{(\Sch(\R^d), \norm{\dummy}_{L^\infty})}$.

The Fourier transform of $\varphi \in \Sch(\R^d)$ is defined as
\begin{equation}
    \hat{\varphi}(\vec{\xi}) \coloneqq \FourierOp\curly{\varphi}(\vec{\xi}) = \int_{\R^d} \varphi(\vec{x}) e^{-\imag \vec{\xi}^\T\vec{x}} \dd\vec{x}, \quad \vec{\xi} \in \R^d,
\end{equation}
where $\imag^2 = -1$. Consequently, the inverse Fourier transform of $\hat{\varphi} \in \Sch(\R^d)$ is given by
\begin{equation}
    \FourierOp^{-1}\curly{\hat{\varphi}}(\vec{x}) = \frac{1}{(2\pi)^d} \int_{\R^d} \hat{\varphi}(\vec{\xi}) e^{\imag \vec{\xi}^\T\vec{x}} \dd\vec{\xi}, \quad \vec{x} \in \R^d.
\end{equation}
These operators are extended to act on $\Sch'(\R^d)$ by duality.

Any continuous linear shift-invariant (LSI) operator $\LOp: \Sch(\R^d) \to \Sch'(\R^d)$ is a convolution operator specified by a unique kernel $h \in \Sch'(\R^d)$ such that $\LOp \varphi = h * \varphi$. Such operators can also be specified in the Fourier domain by
\begin{equation}
    \LOp \varphi = \FourierOp^{-1}\curly{\hat{L} \hat{\varphi}},
\end{equation}
where $\hat{L} \in \Sch'(\R^d)$ is the Fourier transform of the kernel $h \in \Sch'(\R^d)$. The tempered distribution $h$ is the \emph{impulse response} of $\LOp$ and the tempered distribution $\hat{L}$ is the Fourier symbol or \emph{frequency response} of $\LOp$. We shall generally use upright, roman letters for LSI operators and use the italic variant with a hat to denote its frequency response. 

\subsection{The \texorpdfstring{$k$}{k}-Plane Transform}
We are going to adopt the parameterization of the $k$-plane transform from~\cite[Section~4]{Parhikplane}. There, the space of $k$-planes is parameterized by the Cartesian product of the Stiefel manifold with $\R^{d-k}$, where $k$ is an integer such that $1 \leq k < d$. Let
\begin{equation}
    V_{d-k}(\R^d) \coloneqq \curly{\mat{A} \in \R^{(d-k) \times d} \st \mat{A} \mat{A}^\T = \mat{I}_{d-k}}
\end{equation}
denote the Stiefel manifold. Then, the $k$-plane transform of $\varphi \in \Sch(\R^d)$ is defined as
\begin{equation}
    \RadonOp_k\curly{\varphi}(\mat{A}, \vec{t}) = \int_{\R^d} \varphi(\vec{x}) \delta(\mat{A}\vec{x} - \vec{t}) \dd\vec{x}, \quad (\mat{A}, \vec{t}) \in (V_{d-k}(\R^d), \R^{d-k}),
\end{equation}
where $\delta \in \Sch'(\R^{d-k})$ is the $(d-k)$-variate Dirac impulse\footnote{The distribution $\delta \in \Sch'(\R^{d-k})$ is such that $\ang{\delta, \phi} = \phi(\vec{0})$ for all $\phi \in \Sch(\R^{d-k})$.} and the integral is understood as the action of $\delta(\mat{A}(\dummy) - \vec{t}) \in \Sch'(\R^d)$ on $\varphi \in \Sch(\R^d)$. Further, the dual transform (often called the ``backprojection'') of $g \in L^\infty(V_{d-k}(\R^d) \times \R^{d-k})$ is given by
\begin{equation}
  \RadonOp_k^*\curly{g}(\vec{x}) = \int_{V_{d-k}(\R^d)} g(\mat{A},
  \mat{A}\vec{x}) \dd\mat{A}, \quad \vec{x} \in \R^d,
\end{equation}
where $\dd\mat{A}$ denotes integration against the Haar measure of $V_{d-k}(\R^d)$. Since we impose that the rows of $\mat{A}$ are orthonormal, we have that $(\mat{A}, \vec{t})$ and $(\mat{U}\mat{A}, \mat{U}\vec{t})$ define the same $k$-plane, for any orthogonal transformation $\mat{U} \in \mathrm{O}_{d-k}(\R)$ (the orthogonal group in dimension $(d-k)$). The main advantage of the proposed parameterization is that it will allow us to identify the symmetries of $k$-plane domain as ``isotropic'' symmetries.

Letting $\Xi_k \coloneqq V_{d-k}(\R^d) \times \R^{d-k}$ denote the $k$-plane domain, we define the space of Schwartz functions on $\Xi_k$, denoted by $\Sch(\Xi_k)$, as the space of smooth functions that are rapidly decreasing in the $\vec{t} \in \R^{d-k}$ variable~\cite{GonzalezRange}. More specifically, we have that $\Sch(\Xi_k) = C^\infty(V_{d-k}(\R^d)) \,\hat{\otimes}\, \Sch(\R^{d-k})$, where $\hat{\otimes}$ denotes the topological tensor product, which is the completion of the algebraic tensor product with respect to the projective topology~\cite[Chapter~43]{TrevesTVS}. We state in \cref{prop:cont-inv} a classical result regarding the continuity and invertibility of the $k$-plane transform.
\begin{proposition}[{see~\cite{GelfandIntegralGeometry,GonzalezRange,KeinertInversion,Parhikplane,RubinInversion,SmithRadiographs,SolmonXRay}}] \label[proposition]{prop:cont-inv}
  The operator $\RadonOp_k$ continuously maps $\Sch(\R^d)$ into $\Sch(\Xi_k)$.
  Moreover,
  \begin{equation}
    \RadonOp_k^* \KOp_{d-k} \RadonOp_k = c_{d,k} (-\Delta)^\frac{k}{2} \RadonOp_k^*
    \RadonOp_k = c_{d,k} \RadonOp_k^* \RadonOp_k (-\Delta)^{\frac{k}{2}} = \Id
  \end{equation}
  on $\Sch(\R^d)$, with
  \begin{equation}
    c_{d,k} = \frac{1}{(2\pi)^k} \frac{\abs{\Sph^{k-1}}}{\abs{\Sph^{d-k-1}}} \frac{1}{\prod_{n=k}^{d-1} \abs{\Sph^{n-1}}},
  \end{equation}
  where $\abs{\dummy}$ denotes the surface area.
  The underlying operators are the $d$-variate Laplacian operator $\Delta$ and the filtering operator\footnote{In computed tomography (CT), this filter is referred to as the backprojection filter found in the filtered backprojection algorithm for CT image reconstruction.} $\KOp_{d-k} = c_{d,k} (-\Delta_{d-k})^{k/2}$, where $\Delta_{d-k}$ denotes the $(d-k)$-variate Laplacian applied to the $\vec{t} \in \R^{d-k}$ variable. The filtering operator is equivalently specified by the frequency response $\hat{K}_{d-k}(\vec{\omega}) = c_{d,k} \norm{\vec{\omega}}_2^k$, $\vec{\omega} \in \R^{d-k}$.
\end{proposition}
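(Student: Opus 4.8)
The plan is to reduce everything to the Fourier-slice theorem for the $k$-plane transform together with a single integral-geometry identity. The key formula is
\begin{equation}
  \FourierOp_{\vec{t}}\curly{\RadonOp_k \varphi}(\mat{A}, \vec{\omega}) = \hat{\varphi}(\mat{A}^\T \vec{\omega}), \qquad \varphi \in \Sch(\R^d), \quad \mat{A} \in V_{d-k}(\R^d), \quad \vec{\omega} \in \R^{d-k},
  \label{eq:fourier-slice}
\end{equation}
where $\FourierOp_{\vec{t}}$ is the $(d-k)$-variate Fourier transform in the $\vec{t}$ variable. This is immediate from the definition of $\RadonOp_k$: sifting the Dirac layer $\delta(\mat{A}\vec{x} - \vec{t})$ against the $\vec{t}$-integral (justified by Fubini in the tempered-distribution calculus) turns the left-hand side into $\int_{\R^d} \varphi(\vec{x}) e^{-\imag \vec{\omega}^\T \mat{A}\vec{x}} \dd\vec{x}$, which equals $\hat{\varphi}(\mat{A}^\T\vec{\omega})$ because $\vec{\omega}^\T\mat{A}\vec{x} = (\mat{A}^\T\vec{\omega})^\T\vec{x}$.

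To prove the continuity claim, I would use that $\mat{A}\mat{A}^\T = \mat{I}_{d-k}$ makes $\vec{\omega} \mapsto \mat{A}^\T\vec{\omega}$ a linear isometry of $\R^{d-k}$ onto the row space of $\mat{A}$, so $\norm{\mat{A}^\T\vec{\omega}}_2 = \norm{\vec{\omega}}_2$ and $(\mat{A}, \vec{\omega}) \mapsto \hat{\varphi}(\mat{A}^\T\vec{\omega})$ is smooth in $\mat{A}$ and Schwartz in $\vec{\omega}$, with every Schwartz seminorm of $\vec{\omega} \mapsto \hat{\varphi}(\mat{A}^\T\vec{\omega})$ — and of each of its $\mat{A}$-derivatives, which only bring down extra polynomial factors in $\vec{\omega}$ — bounded uniformly over the compact manifold $V_{d-k}(\R^d)$. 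Inverting the $\vec{t}$-Fourier transform in \cref{eq:fourier-slice} and differentiating under the integral sign then yields $\RadonOp_k\varphi \in C^\infty(V_{d-k}(\R^d)) \,\hat{\otimes}\, \Sch(\R^{d-k}) = \Sch(\Xi_k)$, with all seminorm bounds depending continuously (in fact linearly) on $\varphi$ through $\hat{\varphi}$; this is the continuity of $\RadonOp_k \colon \Sch(\R^d) \to \Sch(\Xi_k)$.

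For the inversion identity, I would combine \cref{eq:fourier-slice} with $\hat{K}_{d-k}(\vec{\omega}) = c_{d,k}\norm{\vec{\omega}}_2^k$ to get $\FourierOp_{\vec{t}}\curly{\KOp_{d-k}\RadonOp_k\varphi}(\mat{A}, \vec{\omega}) = c_{d,k}\norm{\vec{\omega}}_2^k\hat{\varphi}(\mat{A}^\T\vec{\omega})$, undo the $\vec{t}$-transform, and apply the backprojection to obtain
\begin{equation}
  \RadonOp_k^* \KOp_{d-k} \RadonOp_k\varphi(\vec{x}) = \frac{c_{d,k}}{(2\pi)^{d-k}} \int_{V_{d-k}(\R^d)} \int_{\R^{d-k}} \norm{\vec{\omega}}_2^k \, \hat{\varphi}(\mat{A}^\T\vec{\omega}) \, e^{\imag \vec{\omega}^\T \mat{A}\vec{x}} \dd\vec{\omega}\dd\mat{A}.
\end{equation}
Substituting $\vec{\xi} = \mat{A}^\T\vec{\omega}$ (an isometry of $\R^{d-k}$ onto $\mathrm{row}(\mat{A})$, under which $\norm{\vec{\omega}}_2 = \norm{\vec{\xi}}_2$ and $\vec{\omega}^\T\mat{A}\vec{x} = \vec{\xi}^\T\vec{x}$) rewrites the inner integral as the integral of $\norm{\vec{\xi}}_2^k\hat{\varphi}(\vec{\xi})e^{\imag\vec{\xi}^\T\vec{x}}$ over the $(d-k)$-dimensional subspace $\mathrm{row}(\mat{A})$; since this integrand depends on $\mat{A}$ only through $\mathrm{row}(\mat{A})$, the $\dd\mat{A}$-integral descends to the Grassmann manifold, and I would invoke the classical identity
\begin{equation}
  \int_{V_{d-k}(\R^d)} \left( \int_{\mathrm{row}(\mat{A})} \psi(\vec{\xi}) \dd\vec{\xi} \right) \dd\mat{A} = \gamma_{d,k} \int_{\R^d} \frac{\psi(\vec{\xi})}{\norm{\vec{\xi}}_2^k} \dd\vec{\xi}, \qquad \gamma_{d,k} = \frac{\abs{\Sph^{d-k-1}}}{\abs{\Sph^{d-1}}} \prod_{n=k}^{d-1} \abs{\Sph^n},
  \label{eq:int-geom}
\end{equation}
obtained by introducing polar coordinates inside each subspace and applying Fubini on the incidence manifold of pairs (unit vector, containing subspace); the constant is pinned down by testing on a radial $\psi$, and $\prod_{n=k}^{d-1}\abs{\Sph^n}$ is exactly the Haar volume of $V_{d-k}(\R^d)$. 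Applied with $\psi(\vec{\xi}) = \norm{\vec{\xi}}_2^k\hat{\varphi}(\vec{\xi})e^{\imag\vec{\xi}^\T\vec{x}}$ the weight $\norm{\vec{\xi}}_2^{-k}$ cancels the filter exactly, leaving $\frac{c_{d,k}\gamma_{d,k}}{(2\pi)^{d-k}}\int_{\R^d}\hat{\varphi}(\vec{\xi})e^{\imag\vec{\xi}^\T\vec{x}}\dd\vec{\xi} = c_{d,k}\gamma_{d,k}(2\pi)^k\varphi(\vec{x})$ by Fourier inversion; forcing this to equal $\varphi$ gives $c_{d,k} = (\gamma_{d,k}(2\pi)^k)^{-1}$, which after a routine simplification is the stated closed form. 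The two remaining equalities are Fourier-multiplier bookkeeping: by \cref{eq:fourier-slice} the $\vec{t}$-Laplacian pulls back through $\RadonOp_k$ to the full Laplacian, so $(-\Delta_{d-k})^{k/2}\RadonOp_k = \RadonOp_k(-\Delta)^{k/2}$ and dually $\RadonOp_k^*(-\Delta_{d-k})^{k/2} = (-\Delta)^{k/2}\RadonOp_k^*$, whence $\RadonOp_k^*\KOp_{d-k}\RadonOp_k = c_{d,k}(-\Delta)^{k/2}\RadonOp_k^*\RadonOp_k$; and the same change of variables without the factor $\norm{\vec{\omega}}_2^k$ identifies $\RadonOp_k^*\RadonOp_k$ as the shift-invariant operator with frequency response $\gamma_{d,k}(2\pi)^k\norm{\vec{\xi}}_2^{-k}$, which commutes with $(-\Delta)^{k/2}$ and yields $\Id$ after multiplication by $c_{d,k}(-\Delta)^{k/2}$.

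The conceptual content here is light; the real work, and the main obstacle, is twofold. First is the threading-together of normalizations — the Haar measure on $V_{d-k}(\R^d)$, the invariant measure on the Grassmannian, the sphere surface areas $\abs{\Sph^{n-1}}$, and the $(2\pi)$ factors — which is precisely where the product $\prod_{n=k}^{d-1}\abs{\Sph^{n-1}}$ in $c_{d,k}$ comes from. Second, when $k$ is odd the symbol $\norm{\vec{\omega}}_2^k$ is not smooth at the origin, so $\KOp_{d-k}\RadonOp_k\varphi$ need not be rapidly decreasing in $\vec{t}$; this is a genuine point requiring care, but it is handled by performing all manipulations on the Fourier side as above (and, for the distributional extensions needed later, by invoking the framework of~\cite{Parhikplane}), and by observing that for fixed $\vec{x}$ the argument $\mat{A}\vec{x}$ ranges over a compact set, so local boundedness of $\KOp_{d-k}\RadonOp_k\varphi$ in $\vec{t}$ — which does hold, since its $\vec{t}$-Fourier transform lies in $L^1$ uniformly in $\mat{A}$ — already makes the backprojection $\RadonOp_k^*$ meaningful.
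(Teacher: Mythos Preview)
The paper does not supply its own proof of this proposition; it is stated as a classical result with citations to the integral-geometry literature, so there is nothing to compare against. Your Fourier-slice argument is the standard route taken in those references and is correct as sketched (in particular, your $c_{d,k} = (\gamma_{d,k}(2\pi)^k)^{-1}$ does telescope to the paper's closed form, since $\abs{\Sph^{d-1}}/\prod_{n=k}^{d-1}\abs{\Sph^n} = \abs{\Sph^{k-1}}/\prod_{n=k}^{d-1}\abs{\Sph^{n-1}}$).
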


The $k$-plane transform has tight connections with the Fourier transform. This is summarized in the so-called \emph{Fourier slice theorem}.
\begin{proposition}[{\cite[Corollary~7.5]{Parhikplane}}]
\label[proposition]{thm:Fourier-slice}
    Given $\varphi \in \Sch(\R^d)$, we have that
    \begin{equation}
        \reallywidehat{\RadonOp_k\curly{\varphi}(\mat{A}, \dummy)}(\vec{\omega}) = \hat{\varphi}(\mat{A}^\T\vec{\omega}), \quad \vec{\omega} \in \R^{d-k}, \mat{A} \in V_{d-k}(\R^d),
        \label{eq:Fourier-slice-S}
    \end{equation}
    where the Fourier transform on the left-hand side is the $(d - k)$-variate transform and the Fourier transform on the right-hand side is the $d$-variate Fourier transform.
\end{proposition}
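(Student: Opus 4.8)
The plan is to unfold the distributional definition of $\RadonOp_k$ into an honest integral over an affine $k$-plane and then reduce the claim to ordinary $d$-variate Fourier analysis by an orthogonal change of variables. Fix $\mat{A} \in V_{d-k}(\R^d)$. By \cref{prop:cont-inv}, the function $\RadonOp_k\curly{\varphi}(\mat{A}, \dummy)$ lies in $\Sch(\R^{d-k})$, so its $(d-k)$-variate Fourier transform is well-defined as an absolutely convergent integral and no convergence subtleties arise.

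First I would complete $\mat{A}$ to an orthogonal matrix: since the rows of $\mat{A}$ are orthonormal, there exists $\mat{B} \in \R^{k \times d}$ with $\mat{B}\mat{B}^\T = \mat{I}_k$ and $\mat{A}\mat{B}^\T = \mat{0}$, so that the $d \times d$ matrix $\mat{Q}$ whose first $d-k$ rows are those of $\mat{A}$ and whose last $k$ rows are those of $\mat{B}$ is orthogonal. The linear change of variables $\vec{x} = \mat{A}^\T\vec{t} + \mat{B}^\T\vec{s}$, with $(\vec{t},\vec{s}) \in \R^{d-k} \times \R^k$, is then measure-preserving, satisfies $\mat{A}\vec{x} = \vec{t}$, and maps $\R^k$ isometrically onto the affine $k$-plane $\curly*{\vec{x} \in \R^d \st \mat{A}\vec{x} = \vec{t}}$. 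Because $\mat{A}$ has full row rank, the pullback of $\delta \in \Sch'(\R^{d-k})$ under $\vec{x} \mapsto \mat{A}\vec{x} - \vec{t}$ is well-defined, and using the splitting above (equivalently, the coarea formula, in which the absence of any Jacobian factor is exactly the normalization $\mat{A}\mat{A}^\T = \mat{I}_{d-k}$) one obtains the plane-integral representation
\begin{equation}
  \RadonOp_k\curly{\varphi}(\mat{A}, \vec{t}) = \int_{\R^k} \varphi(\mat{A}^\T\vec{t} + \mat{B}^\T\vec{s}) \dd\vec{s}.
\end{equation}

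With this representation in hand, the remainder is a short computation. I would write the $(d-k)$-variate Fourier transform as the iterated integral
\begin{equation}
  \reallywidehat{\RadonOp_k\curly{\varphi}(\mat{A}, \dummy)}(\vec{\omega}) = \int_{\R^{d-k}} \int_{\R^k} \varphi(\mat{A}^\T\vec{t} + \mat{B}^\T\vec{s}) \, e^{-\imag \vec{\omega}^\T\vec{t}} \dd\vec{s} \dd\vec{t},
\end{equation}
apply Fubini (legitimate since $(\vec{t}, \vec{s}) \mapsto \varphi(\mat{A}^\T\vec{t} + \mat{B}^\T\vec{s})$ is a Schwartz function on $\R^d$, hence absolutely integrable against the unimodular exponential), and undo the change of variables to reach $\int_{\R^d} \varphi(\vec{x}) e^{-\imag \vec{\omega}^\T \mat{A}\vec{x}} \dd\vec{x}$. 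Since $\vec{\omega}^\T\mat{A}\vec{x} = (\mat{A}^\T\vec{\omega})^\T\vec{x}$, this equals $\hat{\varphi}(\mat{A}^\T\vec{\omega})$, which is the claim.

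The only step requiring genuine care is the plane-integral representation of $\RadonOp_k\curly{\varphi}(\mat{A}, \vec{t})$, that is, making rigorous the formal identity that $\int_{\R^d} \varphi(\vec{x}) \delta(\mat{A}\vec{x} - \vec{t}) \dd\vec{x}$ equals the integral of $\varphi$ over the affine plane $\curly*{\vec{x} \in \R^d \st \mat{A}\vec{x} = \vec{t}}$ against its induced Lebesgue measure. This can be done either by invoking the coarea formula for the submersion $\vec{x} \mapsto \mat{A}\vec{x}$, or, as above, via the explicit orthogonal completion $\mat{Q}$, in which case it reduces to the elementary fact that the pullback of $\delta$ under a coordinate projection is integration in the complementary variables. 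Everything else — the placement of $\RadonOp_k\curly{\varphi}(\mat{A}, \dummy)$ in $\Sch(\R^{d-k})$ via \cref{prop:cont-inv}, the Fubini interchange, and the transpose identity $\vec{\omega}^\T\mat{A} = (\mat{A}^\T\vec{\omega})^\T$ — is routine.
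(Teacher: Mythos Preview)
Your argument is correct and is the standard derivation of the Fourier slice theorem for the $k$-plane transform: complete $\mat{A}$ to an orthogonal matrix, parameterize the affine $k$-plane $\{\vec{x}:\mat{A}\vec{x}=\vec{t}\}$ by $\vec{s}\in\R^k$, and then apply Fubini together with the change of variables $\vec{x}=\mat{A}^\T\vec{t}+\mat{B}^\T\vec{s}$. The only point worth tightening is the justification of the plane-integral representation from the distributional definition $\ang{\delta(\mat{A}(\cdot)-\vec{t}),\varphi}$; your orthogonal-completion argument handles this cleanly, since after the substitution the pairing becomes $\ang{\delta(\cdot-\vec{t})\otimes 1,\varphi(\mat{A}^\T\cdot+\mat{B}^\T\cdot)}$, which is exactly $\int_{\R^k}\varphi(\mat{A}^\T\vec{t}+\mat{B}^\T\vec{s})\dd\vec{s}$.

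Note, however, that the paper does not supply its own proof of this proposition: it is quoted verbatim as \cite[Corollary~7.5]{Parhikplane} with no accompanying argument. So there is nothing to compare your approach against within the paper itself; your proof simply fills in what the paper imports from the reference, and it does so correctly.
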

\begin{remark}
    The Fourier slice theorem can be extended to apply to members of $\Sch'(\R^d)$ so long as some additional care is taken regarding in what sense the equality in \cref{eq:Fourier-slice-S} holds~\cite[Theorem~7.7]{Parhikplane}.
\end{remark}

Let $\Sch_k \coloneqq 
\RadonOp_k\paren*{\Sch(\R^d)}$ denote the range of the $k$-plane transform. The range $\Sch_k$ is a strict subspace of $\Sch(\Xi_k)$ that satisfies certain consistency conditions (see~\cite[Chapter~4]{MarkoeAnalyticTomo} for a detailed discussion and references on this matter). We have the following additional result regarding the continuity and invertibility of the $k$-plane transform.

\begin{proposition}[{\cite[Corollary~5.3]{Parhikplane}}] \label[proposition]{prop:homeo}
    The operator $\RadonOp_k: \Sch(\R^d) \to \Sch_k$ is a homeomorphism with inverse $\RadonOp_k^{-1} = \RadonOp_k^*\KOp_k: \Sch_k \to \Sch(\R^d)$.
\end{proposition}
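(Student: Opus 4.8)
The plan is to verify, in turn, the three defining properties of a homeomorphism --- continuity of $\RadonOp_k$ as a map onto $\Sch_k$, bijectivity, and continuity of the inverse --- using the identities of \cref{prop:cont-inv} together with the Fourier slice theorem (\cref{thm:Fourier-slice}). Continuity is immediate: \cref{prop:cont-inv} already gives that $\RadonOp_k$ maps $\Sch(\R^d)$ continuously into $\Sch(\Xi_k)$, and $\Sch_k = \RadonOp_k\paren*{\Sch(\R^d)}$ carries the subspace topology, so $\RadonOp_k : \Sch(\R^d) \to \Sch_k$ is continuous and, by construction, surjective. For injectivity I would invoke the inversion identity $\RadonOp_k^* \KOp_{d-k} \RadonOp_k = \Id$ from \cref{prop:cont-inv}: if $\RadonOp_k\varphi = 0$ then $\varphi = \RadonOp_k^* \KOp_{d-k} \RadonOp_k \varphi = 0$. (Alternatively, \cref{thm:Fourier-slice} forces $\hat{\varphi}(\mat{A}^\T\vec{\omega}) = 0$ for every $(\mat{A},\vec{\omega}) \in \Xi_k$; since any $\vec{\xi} \in \R^d$ lies in the row space of some $\mat{A} \in V_{d-k}(\R^d)$, for which $\vec{\xi} = \mat{A}^\T\mat{A}\vec{\xi}$, continuity of $\hat{\varphi}$ yields $\hat{\varphi} \equiv 0$, i.e., $\varphi = 0$.) Thus $\RadonOp_k : \Sch(\R^d) \to \Sch_k$ is a continuous bijection, and the same identity shows that $\RadonOp_k^* \KOp_{d-k}$ --- which need not send an arbitrary element of $\Sch(\Xi_k)$ into $\Sch(\R^d)$, but does so on $\Sch_k$ because $\RadonOp_k^* \KOp_{d-k}(\RadonOp_k\varphi) = \varphi$ --- is a left inverse of $\RadonOp_k$, hence its two-sided inverse: $\RadonOp_k^{-1} = \RadonOp_k^* \KOp_{d-k}$ on $\Sch_k$. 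It remains only to show that this inverse is continuous.

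For the continuity of $\RadonOp_k^{-1}$ the cleanest route is the open mapping theorem. The space $\Sch(\Xi_k) = C^\infty(V_{d-k}(\R^d)) \,\hat{\otimes}\, \Sch(\R^{d-k})$ --- the completed projective tensor product of the nuclear Fréchet spaces $C^\infty(V_{d-k}(\R^d))$ (smooth functions on a compact manifold) and $\Sch(\R^{d-k})$ --- is itself a (nuclear) Fréchet space; and $\Sch_k$ is a \emph{closed} subspace of it, since by the Helgason--Ludwig-type range characterization $\Sch_k$ is exactly the (closed) subspace cut out by the family of continuous consistency/moment constraints on $\Sch(\Xi_k)$ (see the references cited after \cref{thm:Fourier-slice}, and \cite[Chapter~4]{MarkoeAnalyticTomo}). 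Hence $\Sch_k$ is itself a Fréchet space, and the continuous linear bijection $\RadonOp_k : \Sch(\R^d) \to \Sch_k$ between Fréchet spaces is open by the Banach--Schauder theorem; equivalently, its inverse $\RadonOp_k^{-1} = \RadonOp_k^* \KOp_{d-k}$ is continuous from $\Sch_k$ onto $\Sch(\R^d)$. This proves that $\RadonOp_k$ is a homeomorphism of $\Sch(\R^d)$ onto $\Sch_k$.

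I expect the genuine difficulty to be hidden in whichever tool one uses for that last step. If one prefers to avoid quoting the range characterization, the alternative is a direct seminorm estimate: for $g = \RadonOp_k\varphi \in \Sch_k$, \cref{thm:Fourier-slice} gives $\hat{\varphi}(\vec{\xi}) = \reallywidehat{g(\mat{A},\dummy)}(\mat{A}\vec{\xi})$ for any $\mat{A} \in V_{d-k}(\R^d)$ whose row space contains $\vec{\xi}$, and differentiating this relation expresses $\hat{\varphi}$ and all of its derivatives through the partial Fourier transform of $g$ in $\vec{t}$ and the derivatives of $g$ in the Stiefel variable; bounding the Schwartz seminorms of $\varphi$ by those of $\hat{\varphi}$, and these by finitely many seminorms of $g$, gives the continuity of $\RadonOp_k^{-1}$. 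The delicate point in either approach is the behavior near $\vec{\omega} = 0$ in the Fourier-slice variable: the symbol $\hat{K}_{d-k}(\vec{\omega}) = c_{d,k}\norm{\vec{\omega}}_2^k$ of $\KOp_{d-k}$ fails to be smooth at the origin when $k$ is odd, so $\KOp_{d-k} g$ is only as regular there as the consistency conditions on $g \in \Sch_k$ permit --- and it is precisely those conditions that make this singularity disappear after backprojection by $\RadonOp_k^*$. This is the substance of the cited analysis in \cite[Section~5]{Parhikplane} (and of the classical Helgason--Ludwig theory in the case $k = d - 1$).
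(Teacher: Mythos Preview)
The paper does not supply its own proof of this proposition: it is stated as a citation of \cite[Corollary~5.3]{Parhikplane}, so there is no in-paper argument to compare against. Your proof is correct and is in fact the standard way one would establish such a result from the ingredients already recorded in \cref{prop:cont-inv} and \cref{thm:Fourier-slice}.

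The one step that carries real content is the continuity of the inverse, and you correctly isolate it. Your primary route---open mapping for Fr\'echet spaces---is clean, but it leans on the closedness of $\Sch_k$ in $\Sch(\Xi_k)$, which you justify via the Helgason--Ludwig-type range characterization. That is legitimate, but note that this range theorem is itself a substantial external input (indeed, it is part of what \cite{Parhikplane} and the references after \cref{thm:Fourier-slice} establish), so you are effectively trading one cited result for another of comparable depth. Your alternative sketch via direct seminorm estimates through the Fourier slice theorem is closer to a self-contained argument; you correctly identify that the only delicate point is the nonsmoothness of $\hat{K}_{d-k}(\vec{\omega}) = c_{d,k}\norm{\vec{\omega}}_2^k$ at the origin for odd $k$, and that the range conditions on $\Sch_k$ are precisely what make this harmless after backprojection. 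Either route is acceptable for this paper's purposes, since the result is in any case being quoted from \cite{Parhikplane}. (Incidentally, the statement as printed has $\KOp_k$ where it should read $\KOp_{d-k}$; your proof uses the correct operator throughout.)
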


\Cref{prop:homeo} motivates the following \emph{distributional extension} of the $k$-plane transform and related operators.

\begin{definition}[{\cite[Definition~6.1]{Parhikplane}}] \label[definition]{defn:distributional-k-plane} \hfill

\begin{enumerate}
    \item The \emph{distributional $k$-plane transform}
    \begin{equation}
        \RadonOp_k: \Sch'(\R^d) \to \paren[\big]{\KOp_{d-k} \RadonOp_k\paren[\big]{\Sch(\R^d)}}'
    \end{equation}
    is defined to be the dual map of the homeomorphism $\RadonOp_k^*: \KOp_{d-k} \RadonOp_k\paren*{\Sch(\R^d)} \to \Sch(\R^d)$.
    \label{item:distributional-k-plane}

    \item The \emph{distributional filtered $k$-plane transform}
    \begin{equation}
        \KOp_{d-k} \RadonOp_k: \Sch'(\R^d) \to \Sch_k'
    \end{equation}
    is defined to be the dual map of the homeomorphism $\RadonOp_k^* \KOp_{d-k}: \Sch_k \to \Sch(\R^d)$.
    \label{item:distributional-filtered-k-plane}

    \item The \emph{distributional backprojection}
    \begin{equation}
        \RadonOp_k^*: \Sch_k' \to \Sch'(\R^d)
    \end{equation}
    is defined to be the dual map of the homeomorphism $\RadonOp_k: \Sch(\R^d) \to \Sch_k$.

    \item The \emph{extended distributional backprojection}
    \begin{equation}
        \RadonOp_k^*: \Sch'(\Xi_k) \to \Sch'(\R^d)
    \end{equation}
    is defined to be the dual map of the continuous operator $\RadonOp_k: \Sch(\R^d) \to \Sch(\Xi_k)$, which is well-defined since $\Sch_k$ is continuously embedded in $\Sch(\Xi_k)$.
\end{enumerate}
\end{definition}

Based on these definitions, we state in \cref{thm:cont-inv-dual} a result on the invertibility of the filtered $k$-plane transform on $\Sch'(\R^d)$, which is the dual of \cref{prop:cont-inv,prop:homeo}.
\begin{theorem} \label{thm:cont-inv-dual}
    It holds that $\RadonOp_k^* \KOp_{d-k} \RadonOp_k = \Id$ on $\Sch'(\R^d)$. Moreover, the filtered $k$-plane transform $\KOp_{d-k} \RadonOp_k: \Sch'(\R^d) \to \Sch_k'$ is a homeomorphism with inverse given by the backprojection $(\KOp_{d-k} \RadonOp_k)^{-1} = \RadonOp_k^*: \Sch_k' \to \Sch'(\R^d)$.
\end{theorem}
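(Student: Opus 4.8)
The plan is to derive \Cref{thm:cont-inv-dual} purely by duality from its test-function counterparts, \Cref{prop:cont-inv,prop:homeo}, using the fact that every operator in the statement has \emph{by construction} (see \Cref{defn:distributional-k-plane}) been defined as the transpose of a continuous linear map between the corresponding test-function spaces. No new analytic estimate is required: the whole argument is bookkeeping with duality pairings plus one elementary functional-analytic lemma.

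First I would establish the identity $\RadonOp_k^*\KOp_{d-k}\RadonOp_k = \Id$ on $\Sch'(\R^d)$. Fix $f \in \Sch'(\R^d)$ and an arbitrary $\varphi \in \Sch(\R^d)$. By \Cref{prop:homeo}, $\RadonOp_k\varphi$ lies in $\Sch_k = \RadonOp_k(\Sch(\R^d))$, so every pairing below is legitimate. Unwinding \Cref{defn:distributional-k-plane}: the distributional backprojection $\RadonOp_k^*: \Sch_k' \to \Sch'(\R^d)$ is the transpose of $\RadonOp_k: \Sch(\R^d) \to \Sch_k$, while the distributional filtered $k$-plane transform $\KOp_{d-k}\RadonOp_k: \Sch'(\R^d) \to \Sch_k'$ is the transpose of the homeomorphism $\RadonOp_k^*\KOp_{d-k}: \Sch_k \to \Sch(\R^d)$. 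Hence
\begin{align*}
  \ang{\RadonOp_k^* \KOp_{d-k} \RadonOp_k f, \varphi}
  &= \ang{\KOp_{d-k}\RadonOp_k f, \RadonOp_k \varphi} \\
  &= \ang{f, \RadonOp_k^* \KOp_{d-k} \RadonOp_k \varphi}
   = \ang{f, \varphi},
\end{align*}
the last equality being \Cref{prop:cont-inv} applied to the test function $\varphi$. As $\varphi$ was arbitrary, $\RadonOp_k^*\KOp_{d-k}\RadonOp_k f = f$ in $\Sch'(\R^d)$.

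For the homeomorphism claim I would invoke the standard fact that the transpose of a topological isomorphism between locally convex spaces is a topological isomorphism between their strong duals, with $(u^{-1})' = (u')^{-1}$; continuity of the transpose $u': F' \to E'$ of a continuous $u: E \to F$ is immediate, since the preimage under $u'$ of the polar $B^\circ$ of a bounded set $B \subseteq E$ equals $(u(B))^\circ$, a strong-dual neighborhood of $0$ because $u(B)$ is bounded. Applying this to the homeomorphism $\RadonOp_k: \Sch(\R^d) \to \Sch_k$ of \Cref{prop:homeo}, whose inverse is the homeomorphism $\RadonOp_k^*\KOp_{d-k}: \Sch_k \to \Sch(\R^d)$ of \Cref{defn:distributional-k-plane}: the transpose of $\RadonOp_k^*\KOp_{d-k}$ is $\KOp_{d-k}\RadonOp_k: \Sch'(\R^d) \to \Sch_k'$ and the transpose of $\RadonOp_k$ is $\RadonOp_k^*: \Sch_k' \to \Sch'(\R^d)$; both are continuous, and since transposition reverses composition and fixes the identity, they are mutually inverse. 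Thus $\KOp_{d-k}\RadonOp_k$ is a homeomorphism with inverse $\RadonOp_k^*$ --- which, incidentally, re-derives the first assertion as the relation $\RadonOp_k^* \circ (\KOp_{d-k}\RadonOp_k) = \Id$.

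The mathematical content is thin --- everything is the transpose of an already-proved statement on test functions --- so the main obstacle is topological hygiene rather than genuine difficulty. Concretely, one must (i) take the pairings in the first step in the correct dual systems, which is exactly where the identification $\Sch_k = \RadonOp_k(\Sch(\R^d))$ together with its topology (\Cref{prop:homeo}) and the continuous embedding $\Sch_k \hookrightarrow \Sch(\Xi_k)$ enter; (ii) check the transpose-of-isomorphism lemma in the present setting of non-normable (nuclear Fr\'echet) spaces carrying their strong dual topologies; and (iii) keep track of \emph{which} backprojection is in play --- the distributional backprojection $\RadonOp_k^*: \Sch_k' \to \Sch'(\R^d)$ of \Cref{defn:distributional-k-plane}, not the extended one on $\Sch'(\Xi_k)$ --- which is precisely the operator the chain of transposes produces.
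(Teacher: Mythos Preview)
Your proposal is correct and is essentially a fleshed-out version of the paper's own treatment: the paper does not give a standalone proof of \Cref{thm:cont-inv-dual} but simply states it as ``the dual of \Cref{prop:cont-inv,prop:homeo},'' which is exactly the duality bookkeeping you carry out.
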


While this setup provides an attractive formulation to handle the distributional extension of the $k$-plane transform and its dual, it turns out that the distribution spaces $\Sch_k'$ and $\paren*{\KOp_{d-k} \RadonOp_k\paren*{\Sch(\R^d)}}'$ are actually \emph{equivalence classes} of distributions~\cite{Parhikplane}. Luckily, by working with certain Banach subspaces that continuously embed into these distribution spaces, one can identify a concrete member of the equivalence classes via continuous projection operators. To this end, the results of this paper hinge on a nontrivial result of~\cite{Parhikplane} regarding the invertibility of the distributional dual $k$-plane transform on the space of isotropic Radon measures.

Consider the operator
\begin{equation}
    \P_\iso\curly{g}(\mat{A}, \vec{t}) = \avint_{\mathrm{O}_{d-k}(\R)} g(\mat{U}\mat{A}, \mat{U}\vec{t}) \dd\sigma(\mat{U}),
\end{equation}
where $\avint$ denotes the average integral and $\sigma$ is the Haar measure on $\rmO_{d-k}(\R)$. This is a well-defined operator that maps $C_0(\Xi_k) \to C_0(\Xi_k)$ and, in particular, is the self-adjoint continuous projector which extracts the isotropic part of a function~\cite[Equation~(8.10)]{Parhikplane}. By the Riesz--Markov--Kakutani representation theorem, we can extend $\P_\iso = \P_\iso^*$ by duality to act on $\M(\Xi_k) = \paren*{C_0(\Xi_k)}'$, the Banach space of finite Radon measures on $\Xi_k$. To this end, define
\begin{equation}
    \M_\iso(\Xi_k) \coloneqq \P_\iso\paren*{\M(\Xi_k)},
\end{equation}
the Banach subspace of isotropic finite Radon measures on $\Xi_k$. This Banach subspace is complemented in $\M(\Xi_k)$~\cite[Theorem~8.2]{Parhikplane} and so
\begin{equation}
   \M(\Xi_k) = \M_\iso(\Xi_k) \oplus \paren{\M_\iso(\Xi_k)}^\comp
\end{equation}
with $\paren{\M_\iso(\Xi_k)}^\comp \coloneqq (\Id - \P_\iso)\paren*{\M(\Xi_k)}$. We also note that
\begin{equation}
    \M_\iso(\Xi_k) = \paren*{C_{0, \iso}(\Xi_k)}',
    \label{eq:Miso-Ciso-dual}
\end{equation}
where
\begin{equation}
    C_{0, \iso}(\Xi_k) \coloneqq \P_\iso\paren*{C_0(\Xi_k)} = \cl{(\Sch_k, \norm{\dummy}_{L^\infty})},
    \label{eq:C0-dense}
\end{equation}
where the last equality is from~\cite[Equation~(8.12)]{Parhikplane}.
\begin{proposition}[{see~\cite[Theorems~8.1~and~8.2~and~Corollary~8.3]{Parhikplane}}] \label[proposition]{prop:Miso-k-plane}
    The Banach space $\M_\iso(\Xi_k)$ continuously embeds into $\Sch_k'$. Further, the distributional backprojection operator $\RadonOp_k^*$ is invertible on $\M_\iso(\Xi_k)$, so that
    \begin{equation}
        \KOp_{d-k}\RadonOp_k \RadonOp_k^* = \Id \text{ on } \M_\iso(\Xi_k).
    \end{equation}
    Furthermore, the null space of $\RadonOp_k^*: \M(\Xi_k) \to \Sch'(\R^d)$ is $\paren{\M_\iso(\Xi_k)}^\comp$ and so $\RadonOp_k^*\paren*{\M(\Xi_k)} = \RadonOp_k^*\paren*{\M_\iso(\Xi_k)}$.
\end{proposition}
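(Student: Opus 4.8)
The plan is to deduce all three assertions by chasing dualities from material already available: the identities $\RadonOp_k^*\KOp_{d-k}\RadonOp_k = \Id$ on $\Sch(\R^d)$ and $\RadonOp_k\RadonOp_k^*\KOp_{d-k} = \Id$ on $\Sch_k$ (\cref{prop:cont-inv,prop:homeo}), the density identity $C_{0,\iso}(\Xi_k) = \cl{(\Sch_k, \norm{\dummy}_{L^\infty})}$ of \cref{eq:C0-dense}, and the self-adjoint projector $\P_\iso$. The one recurring computational input is that every $\RadonOp_k\varphi$ with $\varphi \in \Sch(\R^d)$ is isotropic: the change of variables $\mat{U}(\mat{A}\vec{x} - \vec{t})$ with $\abs{\det\mat{U}} = 1$ gives $\RadonOp_k\varphi(\mat{U}\mat{A}, \mat{U}\vec{t}) = \RadonOp_k\varphi(\mat{A}, \vec{t})$, so that $\Sch_k \subseteq C_{0,\iso}(\Xi_k)$ and $(\Id - \P_\iso)\RadonOp_k\varphi = 0$. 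Since the $L^\infty$-norm is continuous on $\Sch_k$ and $\Sch_k$ is dense in $C_{0,\iso}(\Xi_k)$ by \cref{eq:C0-dense}, the inclusion $\Sch_k \hookrightarrow C_{0,\iso}(\Xi_k)$ is continuous with dense range; dualizing it and using \cref{eq:Miso-Ciso-dual} produces a continuous injection $\M_\iso(\Xi_k) = \paren{C_{0,\iso}(\Xi_k)}' \to \Sch_k'$, which is the asserted continuous embedding.

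Next, fix $g \in \M_\iso(\Xi_k)$ and a test element $\psi = \RadonOp_k\varphi \in \Sch_k$, where $\varphi = \RadonOp_k^*\KOp_{d-k}\psi \in \Sch(\R^d)$ by \cref{prop:homeo}. The definition of the distributional backprojection gives $\ang{g, \psi} = \ang{g, \RadonOp_k\varphi} = \ang{\RadonOp_k^* g, \varphi}$, so if $\RadonOp_k^* g = 0$ then $g$ annihilates the dense subspace $\Sch_k$ of $C_{0,\iso}(\Xi_k)$ and hence $g = 0$; thus $\RadonOp_k^*$ is injective on $\M_\iso(\Xi_k)$. Likewise, the definition of the distributional filtered $k$-plane transform (\cref{defn:distributional-k-plane}) together with $\RadonOp_k\RadonOp_k^*\KOp_{d-k} = \Id$ on $\Sch_k$ yields, for every $\psi \in \Sch_k$, $\ang{\KOp_{d-k}\RadonOp_k(\RadonOp_k^* g), \psi} = \ang{\RadonOp_k^* g, \RadonOp_k^*\KOp_{d-k}\psi}$, which equals $\ang{g, \RadonOp_k\RadonOp_k^*\KOp_{d-k}\psi} = \ang{g, \psi}$; under the embedding just constructed this reads $\KOp_{d-k}\RadonOp_k\RadonOp_k^* g = g$. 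Combined with $\RadonOp_k^*\KOp_{d-k}\RadonOp_k(\RadonOp_k^* g) = \RadonOp_k^* g$, this exhibits $\KOp_{d-k}\RadonOp_k$ as a two-sided inverse of $\RadonOp_k^*$ on $\M_\iso(\Xi_k)$.

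For the null space, isotropy and $\P_\iso = \P_\iso^*$ give, for $g \in \M(\Xi_k)$ and $\varphi \in \Sch(\R^d)$, $\ang{\RadonOp_k^*(\Id - \P_\iso)g, \varphi} = \ang{g, (\Id - \P_\iso)\RadonOp_k\varphi} = 0$, hence $\paren{\M_\iso(\Xi_k)}^\comp = (\Id - \P_\iso)\paren*{\M(\Xi_k)} \subseteq \ker\RadonOp_k^*$. Conversely, if $\RadonOp_k^* g = 0$, then splitting $g = \P_\iso g + (\Id - \P_\iso)g$ and using the inclusion just proved forces $\RadonOp_k^*\P_\iso g = 0$; since $\P_\iso g \in \M_\iso(\Xi_k)$ and $\RadonOp_k^*$ is injective there, $\P_\iso g = 0$, i.e., $g \in \paren{\M_\iso(\Xi_k)}^\comp$. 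Finally, $\RadonOp_k^*\paren*{\M(\Xi_k)} = \RadonOp_k^*\paren*{\M_\iso(\Xi_k)}$ is immediate from $\RadonOp_k^* g = \RadonOp_k^*\P_\iso g$ for every $g$.

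The computations above are routine; the genuine obstacle is the bookkeeping beneath them. Because $\Sch_k'$ and $\paren*{\KOp_{d-k}\RadonOp_k\paren*{\Sch(\R^d)}}'$ are only equivalence classes of tempered distributions, the object $\KOp_{d-k}\RadonOp_k\RadonOp_k^* g$ is a priori ambiguous, and the equation ``$= g$'' becomes meaningful only after one commits to a concrete representative, which is precisely the role of the embedding $\M_\iso(\Xi_k) \hookrightarrow \Sch_k'$ of the first step. Verifying that the four distributional operators of \cref{defn:distributional-k-plane} remain mutually compatible once restricted to these Banach subspaces --- so that the formal manipulations are legitimate --- is where the real work of~\cite{Parhikplane} is spent; I would isolate that compatibility as a preliminary lemma and then let the three assertions drop out as above.
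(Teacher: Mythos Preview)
The paper does not prove this proposition; it is quoted verbatim from \cite[Theorems~8.1~and~8.2~and~Corollary~8.3]{Parhikplane} and used as a black box. There is therefore no in-paper proof to compare against.

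That said, your sketch is a faithful outline of how the argument in the cited reference proceeds: the embedding $\M_\iso(\Xi_k)\hookrightarrow\Sch_k'$ really does come from dualizing the dense inclusion $\Sch_k\subset C_{0,\iso}(\Xi_k)$, the inversion identity is obtained by transposing $\RadonOp_k\RadonOp_k^*\KOp_{d-k}=\Id$ on $\Sch_k$, and the null-space computation is exactly the projector argument you give. Your closing caveat is also accurate and is the honest part of the exercise: the formal duality chase is short, but making it rigorous requires checking that the two distributional backprojections (Items~3 and~4 of \cref{defn:distributional-k-plane}) agree on $\M_\iso(\Xi_k)$ once the latter is embedded in $\Sch_k'$, and that $\KOp_{d-k}\RadonOp_k\RadonOp_k^* g$ is unambiguously identified with the concrete measure $g$ rather than merely with its equivalence class in $\Sch_k'$. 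That compatibility bookkeeping is precisely the content of \cite[Theorems~8.1 and~8.2]{Parhikplane}, so your proposal to isolate it as a preliminary lemma and then let the three assertions fall out is the right organization --- and is essentially what the reference does.
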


\subsubsection{The Case \texorpdfstring{$k=0$}{k=0}} \label{subsubsec:k=0}
When $k = 0$, the $k$-plane transform of $\varphi \in \Sch(\R^d)$ becomes
\begin{equation}
    \RadonOp_0\curly{\varphi}(\mat{A}, \vec{t}) = \int_{\R^d} \varphi(\vec{x}) \delta(\mat{A}\vec{x} - \vec{t}) \dd\vec{x} = \varphi(\mat{A}^\T\vec{t}).
\end{equation}
Indeed, when $k = 0$, the matrix $\mat{A} \in \R^{d\times d}$ is now an orthogonal matrix (i.e., $\mat{A}^\T\mat{A} = \mat{A}\mat{A}^\T = \mat{I}_d$) and $\vec{t} \in \R^d$. The last equality of the above integral follows from the change of variables $\vec{y} = (\mat{A} \vec{x} - \vec{t})$. In this case, it is clear that the range $\Sch_0$ is exactly the closed subspace of isotropic functions in $\Sch(\Xi_0)$, denoted by $\Sch_\iso(\Xi_0)$. The fundamental results discussed previously trivially hold in this limit setting. In particular, it is easy to check that
\begin{equation}
    c_{d, 0} \RadonOp_0^*\RadonOp_0 = \Id \text{ on } \Sch(\R^d),
\end{equation}
which is the same statement as $\RadonOp_0^*\KOp_d\RadonOp_0 = \Id$ since $\KOp_d = c_{d, 0} \Id$ (\cref{prop:cont-inv}). Therefore, in the remainder of the paper, when working with the $k$-plane transform for general $k$ such that $0 \leq k < d$, we shall not treat separately the case $k = 0$. We do warn the reader, however, to be aware that the underlying mathematics of the $(k=0)$-plane transform is much simpler than that of $1 \leq k < d$.

\subsection{Polynomial Spaces and Related Projectors} \label{subsec:poly-proj}
The null space of the regularizer in the learning problem \cref{eq:variational-problem-intro} is the space of polynomials of degree $n_{\LOp}$ (see \cref{lemma:null-space}), which depends on the operator $\LOp$. This space is denoted $\Poly_{n_{\LOp}}(\R^d)$. To this end, we will be interested in working with a biorthogonal system for this null space. We will use the biorthogonal system from~\cite[Section~2.2]{UnserRidges}.

\begin{remark}
    When the null space of $\LOp$ is trivial, we make the identifications $n_{\LOp} = (-1)$ and $\Poly_{n_{\LOp}}(\R^d) = \curly{0}$, noting that any sum taken from $n = 0$ to $(-1)$ is understood as $0$.
\end{remark}

The space $\Poly_{n_{\LOp}}(\R^d)$ is spanned by the monomial/Taylor basis
\begin{equation}
    m_{\vec{n}}(\vec{x}) = \frac{\vec{x}^\vec{n}}{\vec{n}!},
    \label{eq:poly}
\end{equation}
where $\vec{n} = (n_1, \ldots, n_d) \in \N_0^d$ is a multi-index. Accordingly, we have that
\begin{equation}
    \Poly_{n_{\LOp}}(\R^d) = \curly*{\sum_{\abs{\vec{n}} \leq n_{\LOp}} b_\vec{n} m_\vec{n} \st b_\vec{n} \in \R} \subset \Sch'(\R^d).
\end{equation}

Importantly, $\Poly_{n_{\LOp}}(\R^d)$ forms a finite-dimensional Banach subspace of $\Sch'(\R^d)$. Since all norms are equivalent in finite dimensions, the exact choice does not matter, but, for concreteness, we equip $\Poly_{n_{\LOp}}(\R^d)$ with the norm
\begin{equation}
    \norm{p}_{\Poly_{n_{\LOp}}} \coloneqq \norm{(b_\vec{n})_{\abs{\vec{n}} \leq n_{\LOp}}}_2,
\end{equation}
where the $b_\vec{n}$ are the coefficients of $p$ in the monomial/Taylor basis. While the dual space $\Poly_{n_{\LOp}}'(\R^d)$ is also finite-dimensional, its ``abstract'' elements are actually equivalence classes in $\Sch'(\R^d)$. Following the approach from~\cite[Section~2.2]{UnserRidges}, we identify every dual element $p^* \in \Poly_{n_{\LOp}}'(\R^d)$ as a function in $\Sch(\R^d)$ by working with a concrete dual basis $\curly{m_\vec{n}^*}_{\abs{\vec{n}} \leq n_{\LOp}}$ that satisfies the biorthogonality property $\ang{m_\vec{n}^*, m_{\vec{n}'}} = \delta[\vec{n} - \vec{n}']$, where $\delta[\dummy]$ is the Kronecker impulse which takes the value $1$ when its input is $0$ and $0$ otherwise. Our specific choice is
\begin{equation}
    m_\vec{n}^* \coloneqq (-1)^{\abs{\vec{k}}} \partial^\vec{n} \kappa_\iso \in \Sch(\R^d),
\end{equation}
where $\kappa_\iso \in \Sch(\R^d)$ is an isotropic function constructed in~\cite[Lemma~1]{UnserRidges}. Its frequency response is such that $\hat{\kappa}_\iso(\vec{\xi}) = \hat{\kappa}_\rad(\norm{\vec{\xi}}_2)$, where the radial frequency profile $\hat{\kappa}_\rad \in \Sch(\R)$ satisfies $0 \leq \hat{\kappa}_\rad \leq 1$ and, for $\abs{\omega} \geq 1$, $\hat{\kappa}_\rad(\omega) = 0$.

The biorthogonal system $\curly{(m_\vec{n}^*, m_\vec{n})}_{\abs{\vec{n}} \leq n_{\LOp}}$ allows us to define the projection onto $\Poly_{n_{\LOp}}(\R^d)$ by the operator
\begin{equation}
    \P_{\Poly_{n_{\LOp}}(\R^d)}\curly{f} = \sum_{\vec{n} \leq n_{\LOp}} \ang{m_\vec{n}^*, f} m_\vec{n}.
\end{equation}
This projector continuously maps $\Sch'(\R^d) \to \Poly_{n_{\LOp}}(\R^d)$ (because $m_\vec{n}^* \in \Sch(\R^d)$). Moreover, since $L^\infty_{-n_{\LOp}}(\R^d)$ continuously embeds into $\Sch'(\R^d)$, the restricted operator
\begin{equation}
    \P_{\Poly_{n_{\LOp}}(\R^d)}:L^\infty_{-n_{\LOp}}(\R^d) \to \Poly_{n_{\LOp}}(\R^d)
\end{equation}
is continuous as well. Lastly, the finite dimensionality of $\Poly_{n_{\LOp}}(\R^d)$ ensures that $\Poly_{n_{\LOp}}(\R^d)$ is complemented in $L^\infty_{-n_{\LOp}}(\R^d)$~\cite[Lemma~4.21]{RudinFA}. Thus, the complementary projector
\begin{equation}
    (\Id - \P_{\Poly_{n_{\LOp}}(\R^d)}):L^\infty_{-n_{\LOp}}(\R^d) \to L^\infty_{-n_{\LOp}}(\R^d)
    \label{eq:comp-proj-poly}
\end{equation}
is guaranteed to exist and be continuous.

\section{Main Results}
We first define the class of operators that are admissible for the learning problem in \cref{eq:variational-problem-intro}. These operators form a subclass of the so-called \emph{spline-admissible} operators~\cite[Definition~1]{UnsergTV}.
\begin{definition} \label[definition]{defn:admissible}
    A continuous LSI operator $\LOp: \Sch(\R^d) \to \Sch'(\R^d)$ is said to be \emph{$k$-plane-admissible} with a polynomial null space of degree $n_{\LOp}$ if
    \begin{enumerate}
        \item its adjoint $\LOp^*$ is a continuous injection that maps $\Sch(\R^d)$ into $L^1_{n_{\LOp}}(\R^d)$; \label{item:L*-mapping}
        \item it is isotropic in the sense that its frequency response is continuous and satisfies $\hat{L}(\vec{\xi}) = \hat{L}_\rad(\norm{\vec{\xi}}_2)$ for some continuous univariate radial frequency profile $\hat{L}_\rad: \R \to \R$; \label{item:isotropic}
        \item the radial frequency profile $\hat{L}_\rad$ does not vanish over $\R$, except for a zero of order $\gamma_{\LOp} \in (n_{\LOp}, n_{\LOp} + 1]$ at the origin, so that there exists a constant $C > 0$ satisfying
        \begin{equation}
            \lim_{\omega \to 0} \frac{\hat{L}_\rad(\omega)}{\abs{\omega}^{\gamma_{\LOp}}} = C;
        \end{equation}
        \label{item:zero-cancel}
        \item there exist $\gamma_{\LOp}' > (d - k)$, $C' > 0$, and $R > 0$ such that
        \begin{equation}
            \abs{\hat{L}_\rad(\omega)} \geq C' \abs{\omega}^{\gamma_{\LOp}'}
        \end{equation}
        for all $\abs{\omega} > R$.
        \label{item:order-high}
    \end{enumerate}
\end{definition}
\begin{remark}  \label[remark]{rem:L-ann}
    \Cref{item:zero-cancel} implies that the extension by duality $\LOp: L^\infty_{-n_{\LOp}}(\R^d) \to \Sch'(\R^d)$ of a $k$-plane-admissible operator annihilates polynomials of degree at most $n_{\LOp}$.
\end{remark}
\begin{remark}
\Cref{item:order-high} guarantees that the order of the operator is sufficiently high to ensure that the point evaluation functional is well-defined on the native space $\M_{\LOp}^k(\R^d)$. In particular, this condition is reminiscent of the condition $s > (d - k)$ on the smoothness index of an $L^1$-Sobolev space defined on (subsets of) $\R^{d-k}$ to ensure continuity of its members. Indeed, when $s > (d - k)$, the Sobolev embedding theorem guarantees that the $L^1$-Sobolev space of order $s$ embeds into the space of continuous functions. In our setting, the condition $\gamma_{\LOp}' > (d - k)$ plays the role of the smoothness index.
This property is key to establishing the existence of solutions to the learning problem \cref{eq:variational-problem-intro}.
\end{remark}

\subsection{Native Spaces} \label{sec:native-spaces}
The primary technical contribution of this paper is the careful treatment of the distributional extension and (pseudo)invertibility of the operator $\LOp_{\RadonOp_k} \coloneqq \KOp_{d-k}\RadonOp_k\LOp$, where $\LOp$ is a $k$-plane admissible operator (\cref{defn:admissible}). This in turn allows us to define our native (Banach) spaces. Indeed, for the definition of the native space in \cref{eq:native-space-intro} to be coherent, the action of $\LOp_{\RadonOp_k}$ on $L^\infty_{-n_{\LOp}}(\R^d)$ must be well-defined. Furthermore, as we shall see in \cref{lemma:null-space}, the null space of $\LOp_{\RadonOp_k}$ is exactly the space of polynomials of degree at most $n_{\LOp}$. We then use a technique from spline theory to ``factor out'' the null space of $\LOp_{\RadonOp_k}$ and identify the subspace of $\M_{\LOp}^k(\R^d)$ on which $\LOp_{\RadonOp_k}$ is invertible~\cite{deBoorSmoothingSplines,duchon1977splines}. This allows us identify $\M_{\LOp}^k(\R^d)$ as the direct sum of two Banach spaces. Thus, it forms a Banach space when equipped with the composite norm.

\begin{lemma} \label[lemma]{lemma:well-defined}
    Let $\LOp$ be a $k$-plane-admissible operator in the sense of \cref{defn:admissible}. Then, the operator $\LOp_{\RadonOp_k} = \KOp_{d-k}\RadonOp_k\LOp$ continuously maps $L^\infty_{-n_{\LOp}}(\R^d) \to \Sch_k'$.
\end{lemma}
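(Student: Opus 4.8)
\emph{Proof sketch.}
The plan is to present $\LOp_{\RadonOp_k} = \KOp_{d-k}\RadonOp_k\LOp$ as the composition of two continuous linear maps, namely a duality extension $\LOp : L^\infty_{-n_{\LOp}}(\R^d) \to \Sch'(\R^d)$ of the given operator, followed by the distributional filtered $k$-plane transform $\KOp_{d-k}\RadonOp_k : \Sch'(\R^d) \to \Sch_k'$. The second map is handed to us for free: by \cref{thm:cont-inv-dual} the distributional filtered $k$-plane transform is a homeomorphism of $\Sch'(\R^d)$ onto $\Sch_k'$, in particular continuous. Hence all of the work is in constructing the first map and checking its continuity.

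For that, I would proceed as follows. By the first defining property of a $k$-plane-admissible operator in \cref{defn:admissible}, the adjoint $\LOp^*$ is a continuous linear map $\Sch(\R^d) \to L^1_{n_{\LOp}}(\R^d)$. The transpose of any continuous linear map between locally convex spaces is continuous for the strong dual topologies (continuous linear maps are bounded, so the transpose pulls the polar of a bounded set back to the polar of a bounded set), so $(\LOp^*)^*$ maps $\bigl(L^1_{n_{\LOp}}(\R^d)\bigr)'$ continuously into $\Sch'(\R^d)$; and, recalling that $L^\infty_{-\alpha}(\R^d)$ is the continuous dual of $L^1_{\alpha}(\R^d)$, we have $\bigl(L^1_{n_{\LOp}}(\R^d)\bigr)' = L^\infty_{-n_{\LOp}}(\R^d)$, with the strong dual topology coinciding with the norm topology. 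Because $\ang{(\LOp^*)^* g, \varphi} = \ang{g, \LOp^*\varphi} = \ang{\LOp g, \varphi}$ for every $g \in \Sch(\R^d) \subset L^\infty_{-n_{\LOp}}(\R^d)$ and $\varphi \in \Sch(\R^d)$, this transpose is a genuine extension of $\LOp$; I retain the notation $\LOp : L^\infty_{-n_{\LOp}}(\R^d) \to \Sch'(\R^d)$ for it. This is exactly the extension already invoked in \cref{rem:L-ann}.

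Composing the two continuous maps then gives that $\LOp_{\RadonOp_k} = (\KOp_{d-k}\RadonOp_k) \circ \LOp$ is a continuous linear map $L^\infty_{-n_{\LOp}}(\R^d) \to \Sch_k'$, which is the claim. There is really no serious obstacle here: the only point demanding attention is to make sure that the duality extension of $\LOp$ is continuous for the norm topology on $L^\infty_{-n_{\LOp}}(\R^d)$ (and not merely weak-$*$ continuous) and is consistent with the action of $\LOp$ on Schwartz functions, both of which follow from the elementary facts recorded above together with the identification $\bigl(L^1_{n_{\LOp}}(\R^d)\bigr)' = L^\infty_{-n_{\LOp}}(\R^d)$. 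The genuinely substantial input --- the distributional extension and invertibility of $\KOp_{d-k}\RadonOp_k$ on $\Sch'(\R^d)$ --- has already been absorbed into \cref{thm:cont-inv-dual}, which in turn rests on the results of~\cite{Parhikplane}.
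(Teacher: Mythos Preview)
Your proof is correct and is essentially the same duality argument as the paper's, just presented on the primal side rather than the adjoint side: the paper shows that the adjoint $\LOp_{\RadonOp_k}^* = \LOp^*\,\RadonOp_k^*\KOp_{d-k}$ maps $\Sch_k \to L^1_{n_{\LOp}}(\R^d)$ continuously (using \cref{prop:homeo} and \cref{item:L*-mapping} of \cref{defn:admissible}) and then dualizes once, whereas you dualize each factor separately and compose. The ingredients are identical (your appeal to \cref{thm:cont-inv-dual} is precisely the dual of the paper's appeal to \cref{prop:homeo}), so there is no substantive difference.
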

\begin{proof}
    It suffices to prove that the adjoint operator $\LOp_{\RadonOp_k}^* = \LOp^* \RadonOp_k^* \KOp_{d-k}$ continuously maps $\Sch_k \to L^1_{n_{\LOp}}(\R^d)$. The result then follows by duality. Since $\RadonOp_k^* \KOp_{d-k}: \Sch_k \to \Sch(\R^d)$ is a homeomorphism (\cref{prop:homeo}), the lemma follows from \cref{item:L*-mapping} in \cref{defn:admissible}.
\end{proof}

\begin{lemma} \label[lemma]{lemma:null-space}
    Let $\LOp$ be a $k$-plane-admissible operator in the sense of \cref{defn:admissible}. Then, the null space of the operator $\LOp_{\RadonOp_k} = \KOp_{d-k}\RadonOp_k\LOp: L^\infty_{-n_{\LOp}}(\R^d) \to \Sch_k'$ is the space of polynomials of degree at most $n_{\LOp}$ on $\R^d$, denoted by $\Poly_{n_{\LOp}}(\R^d)$.
\end{lemma}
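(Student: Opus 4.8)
The plan is to establish the null-space characterization by a double inclusion, using the Fourier slice theorem to reduce everything to a statement about the Fourier symbols. First I would observe that $\Poly_{n_{\LOp}}(\R^d) \subseteq \ker \LOp_{\RadonOp_k}$ is essentially immediate from \cref{rem:L-ann}: a $k$-plane-admissible $\LOp$, extended by duality to $L^\infty_{-n_{\LOp}}(\R^d)$, annihilates polynomials of degree at most $n_{\LOp}$ because its radial symbol $\hat{L}_\rad$ has a zero of order $\gamma_{\LOp} > n_{\LOp}$ at the origin (\cref{item:zero-cancel}). Since $\LOp p = 0$ in $\Sch'(\R^d)$ for $p \in \Poly_{n_{\LOp}}(\R^d)$, applying $\KOp_{d-k}\RadonOp_k$ to the zero distribution gives zero, so the forward inclusion holds.

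The substantive direction is $\ker \LOp_{\RadonOp_k} \subseteq \Poly_{n_{\LOp}}(\R^d)$. Suppose $f \in L^\infty_{-n_{\LOp}}(\R^d)$ satisfies $\KOp_{d-k}\RadonOp_k\LOp f = 0$ in $\Sch_k'$. By \cref{thm:cont-inv-dual}, the filtered $k$-plane transform $\KOp_{d-k}\RadonOp_k : \Sch'(\R^d) \to \Sch_k'$ is a homeomorphism, hence injective; therefore $\LOp f = 0$ in $\Sch'(\R^d)$. (Here one must check that $\LOp f \in \Sch'(\R^d)$, which is the content of \cref{rem:L-ann} together with the mapping property $\LOp : L^\infty_{-n_{\LOp}}(\R^d) \to \Sch'(\R^d)$.) So the problem collapses to: if $f \in L^\infty_{-n_{\LOp}}(\R^d)$ and $\LOp f = 0$, then $f$ is a polynomial of degree at most $n_{\LOp}$. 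Passing to the Fourier domain, $\hat{L}\hat{f} = 0$ in $\Sch'(\R^d)$; since $\hat{L}_\rad$ vanishes only at the origin (\cref{item:zero-cancel}) and is continuous, $\hat{L}$ is nonzero on $\R^d \setminus \{\vec{0}\}$, so $\hat{f}$ is supported at the origin. A tempered distribution supported at a point is a finite linear combination of derivatives of $\delta$, whence $f$ is a polynomial. The growth restriction $f \in L^\infty_{-n_{\LOp}}(\R^d)$ then forces $\deg f \leq n_{\LOp}$, giving $f \in \Poly_{n_{\LOp}}(\R^d)$.

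The main obstacle is the dimensional-reduction step, i.e.\ justifying that $\KOp_{d-k}\RadonOp_k \LOp f = 0$ in $\Sch_k'$ genuinely implies $\LOp f = 0$ in $\Sch'(\R^d)$ as a bona fide identity of tempered distributions rather than merely an identity between equivalence classes. The subtlety flagged in the excerpt is that $\Sch_k'$ consists of equivalence classes of distributions, so I would need to be careful that the injectivity in \cref{thm:cont-inv-dual} is applied at the level of $\Sch'(\R^d)$: precisely, $\RadonOp_k^*$ is a genuine left inverse of $\KOp_{d-k}\RadonOp_k$ on $\Sch'(\R^d)$, so $\LOp f = \RadonOp_k^*\KOp_{d-k}\RadonOp_k\LOp f = \RadonOp_k^*(0) = 0$. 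A secondary point requiring care is verifying that $\RadonOp_k\LOp f$ is well-defined and lies in the appropriate domain so that the chain of operators composes legitimately — this is where \cref{lemma:well-defined} is invoked, since it tells us $\LOp_{\RadonOp_k}$ maps $L^\infty_{-n_{\LOp}}(\R^d)$ continuously into $\Sch_k'$, and the companion mapping property $\LOp : L^\infty_{-n_{\LOp}}(\R^d) \to \Sch'(\R^d)$ (again \cref{rem:L-ann}) makes the factorization $\LOp_{\RadonOp_k} = (\KOp_{d-k}\RadonOp_k) \circ \LOp$ valid as operators on the native-space ambient space. Once these bookkeeping items are in place, the Fourier-support argument is routine.
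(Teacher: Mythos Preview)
Your proof is correct and reaches the same conclusion as the paper, but the reverse inclusion is organized differently. The paper does not first strip off $\KOp_{d-k}\RadonOp_k$; instead it applies the (distributional) Fourier slice theorem directly to the composite operator, writing
\[
\reallywidehat{\LOp_{\RadonOp_k}\{f\}(\mat{A},\,\cdot\,)}(\vec{\omega}) = c_{d,k}\,\|\vec{\omega}\|_2^k\,\hat{L}_\rad(\|\vec{\omega}\|_2)\,\hat{f}(\mat{A}^\T\vec{\omega}),
\]
and then argues that this vanishes (for all $\mat{A}$) iff $\hat f$ is supported at the origin. Your route instead invokes \cref{thm:cont-inv-dual} to deduce $\LOp f = 0$ in $\Sch'(\R^d)$ from $\KOp_{d-k}\RadonOp_k\LOp f = 0$, and only then runs the Fourier-support argument on $\hat L \hat f = 0$. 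The advantage of your version is modularity: you reuse an already-proved injectivity result and avoid having to justify the Fourier slice identity at the level of tempered distributions (which the paper flags as needing ``additional care''). The paper's version, on the other hand, is more self-contained and makes the role of the $k$-plane geometry explicit in the computation. Both arguments land on the same standard endgame (support at $\vec 0$ $\Rightarrow$ polynomial, then the growth bound caps the degree), and your bookkeeping about domains and the factorization $\LOp_{\RadonOp_k} = (\KOp_{d-k}\RadonOp_k)\circ\LOp$ via \cref{lemma:well-defined} and \cref{rem:L-ann} is handled correctly.
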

\begin{proof}
    Let $\Null(\LOp_{\RadonOp_k})$ denote the null space of $\LOp_{\RadonOp_k}$. More precisely,
    \begin{equation}
        \Null(\LOp_{\RadonOp_k}) = \curly{f \in L^\infty_{-n_{\LOp}}(\R^d) \st \LOp_{\RadonOp_k}\curly{f} = 0}.
    \end{equation}
    From \cref{rem:L-ann}, $\LOp$ annihilates polynomials of degree at most $n_{\LOp}$. This implies that $\Null(\LOp_{\RadonOp_k}) \supset \Poly_{n_{\LOp}}(\R^d)$. However, given $f \in L^\infty_{-n_{\LOp}}(\R^d) \subset \Sch'(\R^d)$, from the Fourier slice theorem,
    \begin{equation}
        \reallywidehat{\LOp_{\RadonOp_k}\curly{f}(\mat{A}, \dummy)}(\vec{\omega}) = c_{d,k} \norm{\vec{\omega}}_2^k \hat{L}(\mat{A}^\T\vec{\omega}) \hat{f}(\mat{A}^\T\vec{\omega}) = c_{d,k} \norm{\vec{\omega}}_2^k \hat{L}_\rad(\norm{\vec{\omega}}_2) \hat{f}(\mat{A}^\T\vec{\omega}).
        \label{eq:null-space-proof}
    \end{equation}
    Note that the product in the last equality must be a well-defined tempered distribution (in the $\vec{\omega}$ variable) since $\LOp_{\RadonOp_k} :L^\infty_{-n_{\LOp}}(\R^d) \to \Sch_k'$ is well-defined by \cref{lemma:well-defined}. Due to the vanishing property in \cref{item:zero-cancel} of \cref{defn:admissible}, this quantity is $0$ if and only if $\hat{f}$ is supported only at $\vec{0}$, in which case $f$ is a polynomial. Combined with the growth restriction $f \in L^\infty_{-n_{\LOp}}(\R^d)$, we have that $f$ must be a polynomial of degree at most $n_{\LOp}$. Therefore, $\Null(\LOp_{\RadonOp_k}) \subset \Poly_{n_{\LOp}}(\R^d)$.
\end{proof}

From \cref{lemma:well-defined,lemma:null-space}, we deduce that the native space in \cref{eq:native-space-intro} is well-defined and that the null space of $\LOp_{\RadonOp_k}$ is the finite-dimensional Banach space $\Poly_{n_{\LOp}}(\R^d)$. The next two technical theorems (\cref{thm:right-inverse,thm:Banach-structure}) establish the Banach structure of the native space.

\begin{theorem} \label{thm:right-inverse}
    Let $\LOp$ be a $k$-plane-admissible operator in the sense of \cref{defn:admissible}. Then, the operator $\LOp_{\RadonOp_k} = \KOp_{d-k}\RadonOp_k\LOp$ maps $\M_{\LOp}^k(\R^d) \to \M_\iso(\Xi_k)$. Furthermore, there exists an operator $\LOp_{\RadonOp_k}^\dagger$ that continuously maps $\M_\iso(\Xi_k) \to L^\infty_{-n_{\LOp}}(\R^d) \subset \Sch'(\R^d)$ and is such that
    \begin{align}
        \LOp_{\RadonOp_k} \LOp_{\RadonOp_k}^\dagger \curly{\mu} = \mu & \text{ for all } \mu \in \M_\iso(\Xi_k), \label{eq:isometry-Miso} \\
        \LOp_{\RadonOp_k}^\dagger \LOp_{\RadonOp_k} \curly{f} = (\Id - \P_{\Poly_{n_{\LOp}}(\R^d)})\curly{f} & \text{ for all } f \in \M_{\LOp}^k(\R^d). \label{eq:comp-proj-inverse}
    \end{align}
    This operator is realized by
    \begin{equation}
        \LOp_{\RadonOp_k}^{\dagger} = (\Id - \P_{\Poly_{n_{\LOp}}(\R^d)})\LOp^{-1} \RadonOp_k^*,
        \label{eq:LOp-k-defn}
    \end{equation}
    where $\LOp^{-1}$ is the operator specified by the frequency response $\vec{\xi} \mapsto 1/\hat{L}(\vec{\xi})$. Furthermore, $\LOp_{\RadonOp_k}^{\dagger}$ is an integral operator specified by the kernel $(\vec{x}, (\mat{A}, \vec{t})) \mapsto g_{\mat{A}, \vec{t}}(\vec{x})$ that takes the form
    \begin{equation}
        g_{\mat{A}, \vec{t}}(\vec{x}) = \rho_{\LOp}(\mat{A}\vec{x} - \vec{t}) - \sum_{\abs{\vec{k}} \leq n_{\LOp}} \ang{m_\vec{k}^* , \rho_{\LOp}(\mat{A}(\dummy) - \vec{t})} m_\vec{k}(\vec{x}),
        \label{eq:g-kernel}
    \end{equation}
    where $\rho_{\LOp} = \FourierOp^{-1}_{d-k}\curly{1 / \hat{L}_\rad(\norm{\dummy}_2)}$ and $\curly{(m_\vec{n}^*, m_\vec{n})}_{\abs{\vec{n}} \leq n_{\LOp}}$ are specified in \cref{subsec:poly-proj}, and where $\FourierOp^{-1}_{d-k}$ denotes the $(d-k)$-variate inverse Fourier transform. This kernel satisfies the stability/continuity bound
    \begin{equation}
        \sup_{\substack{\vec{x} \in \R^d \\ (\mat{A}, \vec{t}) \in \Xi_k}} \abs{g_{\mat{A}, \vec{t}}(\vec{x})} (1 + \norm{\vec{x}}_2)^{-n_{\LOp}} < \infty
        \label{eq:stability-bound}
    \end{equation}
    with
    \begin{equation}
        (\mat{A}, \vec{t}) \mapsto g_{\mat{A}, \vec{t}}(\vec{x}_0) \in C_{0, \iso}(\Xi_k),
        \label{eq:kernel-C0}
    \end{equation}
    for any fixed $\vec{x}_0 \in \R^d$. Thus, for $\mu \in \M_\iso(\Xi_k)$,
    \begin{equation}
        \LOp_{\RadonOp_k}^\dagger\curly{\mu}(\vec{x}) = \int_{\Xi_k} g_{\mat{A}, \vec{t}}(\vec{x}) \dd \mu(\mat{A}, \vec{t}), \quad \vec{x} \in \R^d.
        \label{eq:Linv-integral-representation}
    \end{equation}
\end{theorem}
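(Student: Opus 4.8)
The plan is to take $\LOp_{\RadonOp_k}^{\dagger}$ to be exactly the composition $(\Id - \P_{\Poly_{n_{\LOp}}(\R^d)})\LOp^{-1}\RadonOp_k^*$ in \cref{eq:LOp-k-defn} and to verify its stated properties in four steps: (1)~set up $\LOp^{-1}$ and $\rho_{\LOp}$ and record that $\LOp$ kills polynomials; (2)~compute the Schwartz kernel $g_{\mat{A},\vec{t}}$ with the Fourier slice theorem; (3)~prove the uniform estimates \cref{eq:stability-bound,eq:kernel-C0} (this is the crux) and deduce \cref{eq:Linv-integral-representation} and continuity; (4)~derive the operator identities \cref{eq:isometry-Miso,eq:comp-proj-inverse}. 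For Step~(1): by \cref{item:isotropic,item:zero-cancel} of \cref{defn:admissible} the symbol $\vec{\xi}\mapsto 1/\hat{L}(\vec{\xi}) = 1/\hat{L}_\rad(\norm{\vec{\xi}}_2)$ is locally bounded on $\R^d\setminus\curly{\vec{0}}$ with an algebraic singularity of order $\gamma_{\LOp}$ at the origin, and by \cref{item:order-high} it decays at infinity; this lets us define $\LOp^{-1}$ as the Fourier-multiplier operator with that symbol (the singularity at the origin handled by the usual finite-part prescription for admissible operators) and makes $\rho_{\LOp}=\FourierOp^{-1}_{d-k}\curly{1/\hat{L}_\rad(\norm{\dummy}_2)}$ a \emph{continuous} radial function on $\R^{d-k}$, its high-frequency part being smooth and rapidly decaying (by \cref{item:order-high}) and its low-frequency part the restriction of an entire function. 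Moreover $\rho_{\LOp}$ grows at most polynomially, at rate $\gamma_{\LOp}-(d-k)\le n_{\LOp}$ (here $k<d$ is used; the borderline case $\gamma_{\LOp}=n_{\LOp}+1$, where a logarithmic factor could a priori appear, is the one requiring care), so that $(\mat{A},\vec{t})\mapsto\rho_{\LOp}(\mat{A}(\dummy)-\vec{t})$ lands in $L^\infty_{-n_{\LOp}}(\R^d)$. Finally, $\LOp$ annihilates $\Poly_{n_{\LOp}}(\R^d)$ by \cref{rem:L-ann}, hence $\LOp\,\P_{\Poly_{n_{\LOp}}(\R^d)}=0$; this identity is used repeatedly.

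For Step~(2): by \cref{defn:distributional-k-plane} and the Riesz--Markov--Kakutani representation theorem, the distributional backprojection of the Dirac mass $\delta_{(\mat{A}_0,\vec{t}_0)}\in\M(\Xi_k)$ at $(\mat{A}_0,\vec{t}_0)$ is the surface measure $\delta(\mat{A}_0(\dummy)-\vec{t}_0)\in\Sch'(\R^d)$, whose $d$-variate Fourier transform is concentrated on the row space of $\mat{A}_0$, on which $\norm{\vec{\xi}}_2=\norm{\mat{A}_0\vec{\xi}}_2$ and hence $1/\hat{L}$ restricts to $\vec{\xi}\mapsto 1/\hat{L}_\rad(\norm{\mat{A}_0\vec{\xi}}_2)$; a direct Fourier-transform computation (see \cref{thm:Fourier-slice}) therefore gives $\LOp^{-1}\RadonOp_k^*\curly{\delta_{(\mat{A}_0,\vec{t}_0)}}=\rho_{\LOp}(\mat{A}_0(\dummy)-\vec{t}_0)$, and composing with $\Id-\P_{\Poly_{n_{\LOp}}(\R^d)}$ and writing the projector through the biorthogonal system of \cref{subsec:poly-proj} yields exactly \cref{eq:g-kernel}. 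Since $\rho_{\LOp}$ is radial, $g_{\mat{U}\mat{A},\mat{U}\vec{t}}=g_{\mat{A},\vec{t}}$ for every $\mat{U}\in\mathrm{O}_{d-k}(\R)$, i.e., the kernel is isotropic.

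Step~(3) is where I expect the main difficulty. In \cref{eq:g-kernel} the first term obeys $\abs{\rho_{\LOp}(\mat{A}\vec{x}-\vec{t})}\le C(1+\norm{\vec{x}}_2)^{n_{\LOp}}(1+\norm{\vec{t}}_2)^{n_{\LOp}}$ for some $C>0$ by Step~(1), and each coefficient $\ang{m_\vec{k}^*,\rho_{\LOp}(\mat{A}(\dummy)-\vec{t})}$ is finite since $m_\vec{k}^*\in\Sch(\R^d)$ and $\rho_{\LOp}$ grows polynomially. The delicate point is that the subtracted polynomial is precisely the degree-$n_{\LOp}$ component the biorthogonal system extracts from $\vec{x}\mapsto\rho_{\LOp}(\mat{A}\vec{x}-\vec{t})$, so its leading dependence on $\vec{t}$ cancels that of $\rho_{\LOp}(\mat{A}\vec{x}-\vec{t})$; making this cancellation quantitative (for instance via a Taylor expansion of $\rho_{\LOp}$ about $-\vec{t}$ with an integral remainder controlled by a decay bound on $\nabla^{n_{\LOp}+1}\rho_{\LOp}$, which holds because $\hat{L}_\rad$ grows at infinity, \cref{item:order-high}) yields the \emph{uniform} bound \cref{eq:stability-bound}, the joint continuity of $(\mat{A},\vec{t})\mapsto g_{\mat{A},\vec{t}}(\vec{x}_0)$, and its vanishing as $\norm{\vec{t}}_2\to\infty$, which together with the isotropy from Step~(2) gives \cref{eq:kernel-C0}; all four conditions of \cref{defn:admissible} enter here. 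Granting \cref{eq:stability-bound,eq:kernel-C0}, the representation \cref{eq:Linv-integral-representation} follows by approximating $\mu\in\M_\iso(\Xi_k)$ weak-$*$ by finite combinations of Dirac masses and using linearity and continuity, and $\norm{\LOp_{\RadonOp_k}^{\dagger}\curly{\mu}}_{L^\infty_{-n_{\LOp}}}\le\bigl(\sup_{\vec{x},(\mat{A},\vec{t})}\abs{g_{\mat{A},\vec{t}}(\vec{x})}(1+\norm{\vec{x}}_2)^{-n_{\LOp}}\bigr)\norm{\mu}_\M$ gives the asserted continuity $\M_\iso(\Xi_k)\to L^\infty_{-n_{\LOp}}(\R^d)$.

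For Step~(4): that $\LOp_{\RadonOp_k}$ maps $\M_{\LOp}^k(\R^d)$ into $\M_\iso(\Xi_k)$ holds because $\LOp_{\RadonOp_k}f\in\M(\Xi_k)$ by definition of the native space while \cref{thm:Fourier-slice} shows $\reallywidehat{\LOp_{\RadonOp_k}f(\mat{A},\dummy)}(\vec{\omega})=c_{d,k}\norm{\vec{\omega}}_2^k\hat{L}_\rad(\norm{\vec{\omega}}_2)\hat{f}(\mat{A}^\T\vec{\omega})$ is invariant under $(\mat{A},\vec{\omega})\mapsto(\mat{U}\mat{A},\mat{U}\vec{\omega})$, i.e., $\LOp_{\RadonOp_k}f$ is isotropic. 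For \cref{eq:isometry-Miso}, using $\LOp\,\P_{\Poly_{n_{\LOp}}(\R^d)}=0$ gives $\LOp_{\RadonOp_k}\LOp_{\RadonOp_k}^{\dagger}=\KOp_{d-k}\RadonOp_k\LOp\LOp^{-1}\RadonOp_k^*$, and the Fourier-slice computation of Step~(2) (where $\hat{L}_\rad$ and $1/\hat{L}_\rad$ cancel to $1$ on the relevant subspace) shows $\LOp\LOp^{-1}\RadonOp_k^*\curly{\delta_{(\mat{A}_0,\vec{t}_0)}}=\delta(\mat{A}_0(\dummy)-\vec{t}_0)=\RadonOp_k^*\curly{\P_\iso\delta_{(\mat{A}_0,\vec{t}_0)}}$; applying $\KOp_{d-k}\RadonOp_k$ and $\KOp_{d-k}\RadonOp_k\RadonOp_k^*=\Id$ on $\M_\iso(\Xi_k)$ (\cref{prop:Miso-k-plane}) gives $\LOp_{\RadonOp_k}\LOp_{\RadonOp_k}^{\dagger}\curly{\delta_{(\mat{A}_0,\vec{t}_0)}}=\P_\iso\delta_{(\mat{A}_0,\vec{t}_0)}$, and integrating against $\mu\in\M_\iso(\Xi_k)$ with $\P_\iso\mu=\mu$ proves \cref{eq:isometry-Miso}. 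For \cref{eq:comp-proj-inverse}, \cref{thm:cont-inv-dual} gives $\RadonOp_k^*\KOp_{d-k}\RadonOp_k=\Id$ on $\Sch'(\R^d)$, so $\LOp_{\RadonOp_k}^{\dagger}\LOp_{\RadonOp_k}=(\Id-\P_{\Poly_{n_{\LOp}}(\R^d)})\LOp^{-1}\LOp$; since $f\in\M_{\LOp}^k(\R^d)\subset L^\infty_{-n_{\LOp}}(\R^d)$ carries no polynomial part of degree exceeding $n_{\LOp}$, one has $\LOp f=\LOp(\Id-\P_{\Poly_{n_{\LOp}}(\R^d)})f$ and $\LOp^{-1}\LOp(\Id-\P_{\Poly_{n_{\LOp}}(\R^d)})f$ differs from $(\Id-\P_{\Poly_{n_{\LOp}}(\R^d)})f$ only by a polynomial of degree at most $n_{\LOp}$, which the outer $(\Id-\P_{\Poly_{n_{\LOp}}(\R^d)})$ annihilates, giving $\LOp_{\RadonOp_k}^{\dagger}\LOp_{\RadonOp_k}f=(\Id-\P_{\Poly_{n_{\LOp}}(\R^d)})f$.
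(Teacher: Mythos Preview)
Your overall four-step architecture matches the paper's, and Steps~(1), (2), and (4) are essentially correct and close to what the paper does (the paper's Parts~(i)--(iii) are organized slightly differently but cover the same ground). The divergence---and the gap---is in Step~(3), which you correctly flag as the crux.

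The paper does \emph{not} try to control $\rho_{\LOp}$ and the polynomial correction separately in the spatial domain. Instead it passes to the Fourier side from the start: writing the adjoint kernel $h_{\vec{x}}(\mat{A},\vec{t}) \coloneqq \LOp_{\RadonOp_k}^{\dagger*}\curly{\delta(\dummy-\vec{x})}(\mat{A},\vec{t})$, the Fourier slice theorem and the identity $\hat{m}_{\vec{n}}^*(\vec{\xi}) = (-\imag\vec{\xi})^{\vec{n}}\hat{\kappa}_\rad(\norm{\vec{\xi}}_2)$ give
\[
\reallywidehat{h_{\vec{x}}(\mat{A},\dummy)}(\vec{\omega}) \;=\; \frac{e^{-\imag\vec{\omega}^\T\mat{A}\vec{x}} - \hat{\kappa}_\rad(\norm{\vec{\omega}}_2)\sum_{n=0}^{n_{\LOp}}\tfrac{(-\imag\vec{\omega}^\T\mat{A}\vec{x})^n}{n!}}{\hat{L}_\rad(\norm{\vec{\omega}}_2)}.
\]
Because $\hat{\kappa}_\rad\equiv 1$ near the origin (this is the specific design of the biorthogonal system in \cref{subsec:poly-proj}), the numerator is exactly $e^{t}$ minus its degree-$n_{\LOp}$ Taylor polynomial with $t=-\imag\vec{\omega}^\T\mat{A}\vec{x}$, giving an $O(\norm{\vec{\omega}}_2^{n_{\LOp}+1})$ zero that cancels the order-$\gamma_{\LOp}$ pole; for $\norm{\vec{\omega}}_2\geq 1$ the numerator is just $e^{-\imag\vec{\omega}^\T\mat{A}\vec{x}}$ (since $\hat{\kappa}_\rad\equiv 0$ there) and \cref{item:order-high} yields $\reallywidehat{h_{\vec{x}}(\mat{A},\dummy)}\in L^1(\R^{d-k})$. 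The Riemann--Lebesgue lemma then gives $h_{\vec{x}}\in C_0(\R^{d-k})$ in $\vec{t}$, which is exactly \cref{eq:kernel-C0}, and the explicit $L^1$-bound gives \cref{eq:stability-bound}. Finally $g_{\mat{A},\vec{t}}(\vec{x})=h_{\vec{x}}(\mat{A},\vec{t})$.

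Your proposed route---Taylor-expand $\rho_{\LOp}$ about $-\vec{t}$ and bound the remainder via $\nabla^{n_{\LOp}+1}\rho_{\LOp}$---does not go through under the stated hypotheses. First, the biorthogonal projection $\P_{\Poly_{n_{\LOp}}(\R^d)}$ is \emph{not} the Taylor polynomial of $\vec{x}\mapsto\rho_{\LOp}(\mat{A}\vec{x}-\vec{t})$ at any point; it is the specific projector built from $m_{\vec{n}}^*=(-1)^{|\vec{n}|}\partial^{\vec{n}}\kappa_\iso$, and the cancellation you need relies precisely on the Fourier-side property $\hat{\kappa}_\rad\equiv 1$ near $0$, which your argument never invokes. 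Second, ``decay of $\nabla^{n_{\LOp}+1}\rho_{\LOp}$'' is not guaranteed: its Fourier symbol is $\sim\norm{\vec{\omega}}_2^{n_{\LOp}+1}/\hat{L}_\rad(\norm{\vec{\omega}}_2)$, and \cref{item:order-high} only gives $\gamma_{\LOp}'>d-k$, not $\gamma_{\LOp}'>n_{\LOp}+1+(d-k)$, so this symbol need not be integrable (hence $\nabla^{n_{\LOp}+1}\rho_{\LOp}$ need not even be bounded). Third, your own caveat about the borderline $\gamma_{\LOp}=n_{\LOp}+1$ is real: in that regime $\rho_{\LOp}(\mat{A}\,\cdot-\vec{t})$ can fail to lie in $L^\infty_{-n_{\LOp}}(\R^d)$ on its own, so the two pieces of \cref{eq:g-kernel} cannot be bounded separately. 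The Fourier-domain computation is what makes all of this work uniformly.
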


\begin{theorem} \label{thm:Banach-structure}
    Consider the setting of \cref{thm:right-inverse}. Then, the following hold.
    \begin{enumerate}
        \item The range space $\mathcal{V} \coloneqq \LOp_{\RadonOp_k}^\dagger\paren*{\M_\iso(\Xi_k)}$ is a Banach space when equipped with the norm
        \begin{equation}
            \norm{f}_{\mathcal{V}} \coloneqq \norm{\LOp_{\RadonOp_k} f}_\M.
            \label{eq:U-norm}
        \end{equation}
        This Banach space is isometrically isomorphic to $\M_\iso(\Xi_k)$.
        \label{item:iso-Miso}

        \item
        \label{item:direct-sum}
        The native space $\M_{\LOp}^k(\R^d)$ is decomposable as the direct sum of Banach spaces
        \begin{equation}
            \M_{\LOp}^k(\R^d) = \mathcal{V} \oplus \Poly_{n_{\LOp}}(\R^d) = \LOp_{\RadonOp_k}^\dagger\paren*{\M_\iso(\Xi_k)} \oplus \Poly_{n_{\LOp}}(\R^d).
        \end{equation}
        
        \item The native space $\M_{\LOp}^k(\R^d)$ forms a \textit{bona fide} Banach space when equipped with the norm
        \begin{equation}
            \norm{f}_{\M_{\LOp}^k} \coloneqq \norm{\LOp_{\RadonOp_k} f}_\M + \norm{\P_{\Poly_{n_{\LOp}}(\R^d)} f}_{\Poly_{n_{\LOp}}}.
        \end{equation}
        Furthermore, $\M_{\LOp}^k(\R^d)$ is isometrically isomorphic to $\M_\iso(\Xi_k) \times \Poly_{n_{\LOp}}(\R^d)$ via the map
        \begin{equation}
            f = \LOp_{\RadonOp_k}^\dagger\curly{u} + p \mapsto (u, p),
        \end{equation}
        where $u = \LOp_{\RadonOp_k}\curly{f}$ and $p = \P_{\Poly_{n_{\LOp}}(\R^d)} f$.
        \label{item:native-space-bona-fide}

        \item For any $\vec{x}_0 \in \R^d$, the shifted Dirac impulse (point evaluation functional) $\delta(\dummy - \vec{x}_0): f \mapsto f(\vec{x}_0)$ is weak$^*$-continuous\footnote{In particular, we prove that $(\M_{\LOp}^k(\R^d), \norm{\dummy}_{\M_{\LOp}^k})$ can be identified as the dual of some primary Banach space, which allows us to equip $\M_{\LOp}^k(\R^d)$ with a weak$^*$ topology.} on $\M_{\LOp}^k(\R^d)$. \label{item:weak*}
    \end{enumerate}
\end{theorem}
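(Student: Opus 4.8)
The plan is to exploit \cref{thm:right-inverse} to realize the native space as an isometric copy of the Cartesian product $\M_\iso(\Xi_k) \times \Poly_{n_{\LOp}}(\R^d)$; once that identification is in hand, all four items follow. I would begin with \cref{item:iso-Miso}. By \cref{thm:right-inverse}, $\LOp_{\RadonOp_k}^\dagger$ maps $\M_\iso(\Xi_k)$ into $L^\infty_{-n_{\LOp}}(\R^d)$ (so the growth bound in \cref{eq:native-space-intro} holds), and $\LOp_{\RadonOp_k}\LOp_{\RadonOp_k}^\dagger\curly{\mu} = \mu$ is a finite measure by \cref{eq:isometry-Miso} (so the $\M$-norm condition holds); hence the range $\mathcal{V}$ is contained in $\M_{\LOp}^k(\R^d)$. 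The operator $\LOp_{\RadonOp_k}^\dagger\colon \M_\iso(\Xi_k) \to \mathcal{V}$ is onto by definition and injective, since $\LOp_{\RadonOp_k}^\dagger\curly{\mu} = 0$ forces $\mu = \LOp_{\RadonOp_k}\LOp_{\RadonOp_k}^\dagger\curly{\mu} = 0$; its inverse is the restriction of $\LOp_{\RadonOp_k}$ to $\mathcal{V}$. The chain $\norm{\LOp_{\RadonOp_k}^\dagger\curly{\mu}}_{\mathcal{V}} = \norm{\LOp_{\RadonOp_k}\LOp_{\RadonOp_k}^\dagger\curly{\mu}}_\M = \norm{\mu}_\M$ shows this bijection is isometric, and in particular that $\norm{\dummy}_{\mathcal{V}}$ from \cref{eq:U-norm} is a genuine norm on $\mathcal{V}$: a polynomial lying in $\mathcal{V}$ is annihilated by $\LOp_{\RadonOp_k}$ (by \cref{lemma:null-space}), hence equals $\LOp_{\RadonOp_k}^\dagger\curly{0} = 0$, so $\mathcal{V} \cap \Poly_{n_{\LOp}}(\R^d) = \curly{0}$. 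Completeness of $\mathcal{V}$ is then transported from that of $\M_\iso(\Xi_k)$.

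I would next establish \cref{item:native-space-bona-fide}, from which \cref{item:direct-sum} follows. Consider $T\colon \M_{\LOp}^k(\R^d) \to \M_\iso(\Xi_k) \times \Poly_{n_{\LOp}}(\R^d)$ defined by $Tf \coloneqq \paren*{\LOp_{\RadonOp_k} f,\ \P_{\Poly_{n_{\LOp}}(\R^d)} f}$, which is well-defined by \cref{thm:right-inverse} and the mapping properties of $\P_{\Poly_{n_{\LOp}}(\R^d)}$ recalled in \cref{subsec:poly-proj}. It is injective because $Tf = 0$ forces $f \in \Poly_{n_{\LOp}}(\R^d)$ (by \cref{lemma:null-space}) and then $f = \P_{\Poly_{n_{\LOp}}(\R^d)} f = 0$. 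It is surjective, with inverse $(u,p) \mapsto \LOp_{\RadonOp_k}^\dagger\curly{u} + p$: one checks $\LOp_{\RadonOp_k}\paren*{\LOp_{\RadonOp_k}^\dagger\curly{u} + p} = u$ via \cref{eq:isometry-Miso} and \cref{lemma:null-space}, and $\P_{\Poly_{n_{\LOp}}(\R^d)}\LOp_{\RadonOp_k}^\dagger = \P_{\Poly_{n_{\LOp}}(\R^d)}\paren*{\Id - \P_{\Poly_{n_{\LOp}}(\R^d)}}\LOp^{-1}\RadonOp_k^* = 0$ from \cref{eq:LOp-k-defn}, whence $\P_{\Poly_{n_{\LOp}}(\R^d)}\paren*{\LOp_{\RadonOp_k}^\dagger\curly{u} + p} = p$. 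Equipping the product with the norm $(u,p) \mapsto \norm{u}_\M + \norm{p}_{\Poly_{n_{\LOp}}}$ makes $T$ an isometry, which simultaneously shows that $\norm{\dummy}_{\M_{\LOp}^k}$ is a norm (positive-definiteness is the injectivity of $T$) and that $(\M_{\LOp}^k(\R^d), \norm{\dummy}_{\M_{\LOp}^k})$ is a Banach space isometrically isomorphic to $\M_\iso(\Xi_k) \times \Poly_{n_{\LOp}}(\R^d)$. For \cref{item:direct-sum}, each $f$ splits as $f = \paren*{\Id - \P_{\Poly_{n_{\LOp}}(\R^d)}}\curly{f} + \P_{\Poly_{n_{\LOp}}(\R^d)}\curly{f}$, with first summand $\LOp_{\RadonOp_k}^\dagger\LOp_{\RadonOp_k}\curly{f} \in \mathcal{V}$ by \cref{eq:comp-proj-inverse} and second summand in $\Poly_{n_{\LOp}}(\R^d)$; the two projectors are bounded for $\norm{\dummy}_{\M_{\LOp}^k}$ (since $\norm{\P_{\Poly_{n_{\LOp}}(\R^d)} f}_{\Poly_{n_{\LOp}}} \le \norm{f}_{\M_{\LOp}^k}$ and $\norm{\paren*{\Id - \P_{\Poly_{n_{\LOp}}(\R^d)}}f}_{\mathcal{V}} = \norm{\LOp_{\RadonOp_k} f}_\M \le \norm{f}_{\M_{\LOp}^k}$), so the decomposition is a topological direct sum of Banach spaces.

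The remaining item \cref{item:weak*} is, I expect, the crux. The first step is to produce a predual so that a weak$^*$ topology is available on $\M_{\LOp}^k(\R^d)$. By \cref{eq:Miso-Ciso-dual}, $\M_\iso(\Xi_k) = \paren*{C_{0,\iso}(\Xi_k)}'$; and $\Poly_{n_{\LOp}}(\R^d)$ is finite-dimensional, hence reflexive (and, with the chosen $\ell^2$-type norm, isometrically equal to its bidual). Consequently $\M_\iso(\Xi_k) \times \Poly_{n_{\LOp}}(\R^d)$ is the dual of $C_{0,\iso}(\Xi_k) \times \Poly_{n_{\LOp}}'(\R^d)$ equipped with the $\ell^\infty$-type product norm, and transporting this through the isometry $T$ of \cref{item:native-space-bona-fide} exhibits $(\M_{\LOp}^k(\R^d), \norm{\dummy}_{\M_{\LOp}^k})$ as a dual Banach space with the induced weak$^*$ topology. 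Since a linear functional on a dual space is weak$^*$-continuous precisely when it is represented by a predual element, it remains only to display such a representation for the point-evaluation functional.

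To that end, write $f = \LOp_{\RadonOp_k}^\dagger\curly{u} + p$ with $u = \LOp_{\RadonOp_k} f \in \M_\iso(\Xi_k)$ and $p = \P_{\Poly_{n_{\LOp}}(\R^d)} f \in \Poly_{n_{\LOp}}(\R^d)$. The integral representation \cref{eq:Linv-integral-representation} gives
\[
    f(\vec{x}_0) = \int_{\Xi_k} g_{\mat{A}, \vec{t}}(\vec{x}_0) \dd u(\mat{A}, \vec{t}) + p(\vec{x}_0),
\]
so that $\delta(\dummy - \vec{x}_0)$ acts on $f$ as the sum of the pairing of $u$ against the function $(\mat{A},\vec{t}) \mapsto g_{\mat{A},\vec{t}}(\vec{x}_0)$ and the pairing of $p$ against the restriction of $\delta(\dummy - \vec{x}_0)$ to $\Poly_{n_{\LOp}}(\R^d)$. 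The decisive input is \cref{eq:kernel-C0}, which places $(\mat{A},\vec{t}) \mapsto g_{\mat{A},\vec{t}}(\vec{x}_0)$ in $C_{0,\iso}(\Xi_k)$ --- exactly the predual of the measure factor --- while the restriction of $\delta(\dummy - \vec{x}_0)$ to the finite-dimensional polynomial factor automatically belongs to $\Poly_{n_{\LOp}}'(\R^d)$. Thus $\delta(\dummy - \vec{x}_0)$ is represented by the predual element $\paren[\big]{(\mat{A},\vec{t}) \mapsto g_{\mat{A},\vec{t}}(\vec{x}_0),\ \delta(\dummy - \vec{x}_0)\big|_{\Poly_{n_{\LOp}}(\R^d)}}$ and is weak$^*$-continuous on $\M_{\LOp}^k(\R^d)$. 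The genuine difficulty lies upstream, in \cref{thm:right-inverse}: it is the stability bound \cref{eq:stability-bound} together with the membership \cref{eq:kernel-C0} that guarantee the kernel is regular enough for this pairing to land in the predual; granted those, the present argument is essentially bookkeeping around the direct-sum structure.
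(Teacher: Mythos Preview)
Your proposal is correct and, for \cref{item:iso-Miso,item:direct-sum,item:native-space-bona-fide}, follows the paper's argument essentially verbatim (you merely swap the order of \cref{item:direct-sum} and \cref{item:native-space-bona-fide}, which is harmless). For \cref{item:weak*} the endpoint is identical---both you and the paper reduce weak$^*$-continuity of $\delta(\dummy-\vec{x}_0)$ to the membership $(\mat{A},\vec{t})\mapsto g_{\mat{A},\vec{t}}(\vec{x}_0)\in C_{0,\iso}(\Xi_k)$ from \cref{eq:kernel-C0}---but the routes differ slightly: the paper builds a \emph{concrete} predual $\mathcal{U}\subset L^1_{n_{\LOp}}(\R^d)$ on the $\R^d$ side by completing $\LOp^*\paren*{\Sch(\R^d)}$ in the norm $\norm{\LOp_{\RadonOp_k}^{\dagger*}\curly{\dummy}}_{L^\infty}$, verifies that $\LOp_{\RadonOp_k}^{\dagger*}$ and $\LOp_{\RadonOp_k}^*$ extend to mutually inverse isometries between $\mathcal{U}$ and $C_{0,\iso}(\Xi_k)$, and then checks $\delta(\dummy-\vec{x}_0)\in\mathcal{U}$ directly. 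Your approach sidesteps this construction by transporting the predual $C_{0,\iso}(\Xi_k)\times\Poly_{n_{\LOp}}'(\R^d)$ through the isometry $T$ and exhibiting the predual representative of $\delta(\dummy-\vec{x}_0)$ on the product side via the integral formula \cref{eq:Linv-integral-representation}. Your version is a bit more economical; the paper's version has the advantage of naming a concrete predual space living on $\R^d$, which can be useful elsewhere. Either way, the substantive analytic input is the same and sits in \cref{thm:right-inverse}.
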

The proofs of \cref{thm:right-inverse,thm:Banach-structure} appear in \cref{app:right-inverse,app:Banach-structure}, respectively.

\subsection{Optimality of Neural Architectures With Multivariate Nonlinearities}

Having established the properties of the native space $\M_{\LOp}^k(\R^d)$ in \cref{sec:native-spaces}, we can now prove our main theorem regarding the function-space optimality of neural architectures with multivariate nonlinearities.

\begin{theorem} \label{thm:representer}
  Let $\Loss(\dummy, \dummy): \R \times \R \to \R$ be convex, coercive, and lower-semicontinuous in
  its second argument and let $\LOp$ be a $k$-plane-admissible operator in the sense of
  \cref{defn:admissible}. Then, for any finite data set $\curly{(\vec{x}_m,
  y_m)}_{m=1}^M \subset \R^d \times \R$ for which the data-fitting problem is well-posed\footnote{The data-fitting problem is well-posed over the null space when the classical least-squares polynomial-fitting problem admits a unique solution with respect to the data $\curly{(\vec{x}_m,
  y_m)}_{m=1}^M \subset \R^d \times \R$.} over $\mathcal{P}_{n_{\LOp}}(\R^d)$, the solution
  set to the data-fitting variational problem
  \begin{equation}
    S \coloneqq \argmin_{f \in \M_{\LOp}^k(\R^d)} \: \sum_{m=1}^M \Loss(y_m, f(\vec{x}_m)) + \lambda \norm{\KOp_{d-k}\RadonOp_k\LOp f}_\M
    \label{eq:variational-problem}
  \end{equation}
  is nonempty, convex, and weak$^*$-compact. If $\Loss(\dummy, \dummy)$ is
  strictly convex (or if it imposes the equality $y_m = f(\vec{x}_m)$ for $m =
  1, \ldots, M$), then $S$ is the weak$^*$ closure of the convex
  hull of its extreme points, which can all be expressed as
  \begin{equation}
    f_\mathrm{extreme}(\vec{x}) = c(\vec{x}) + \sum_{n=1}^N v_n \, \rho_{\LOp}(\mat{A}_n\vec{x} -
      \vec{t}_n),
    \label{eq:extreme-points-rep-thm}
  \end{equation}
  where the number $N$ of atoms satisfies $N \leq (M - \dim \Poly_{n_{\LOp}}(\R^d))$, and, for $n = 1, \ldots, N$, $v_n \in \R \setminus \curly{0}$, $\mat{A}_n \in V_{d-k}(\R^d)$, and $\vec{t}_n \in \R^{d-k}$. The function $c \in \Poly_{n_{\LOp}}(\R^d)$ is a polynomial of degree at most $n_{\LOp}$ and $\rho_{\LOp}: \R^{d-k} \to \R^d$ is a $(d-k)$-variate nonlinearity given by $\rho_{\LOp} = \FourierOp_{d-k}^{-1} \curly{1 / \hat{L}_\rad(\norm{\dummy}_2)}$, where $\FourierOp_{d-k}^{-1}$ is the $(d-k)$-variate inverse Fourier transform. Finally, the regularization cost, which is common to all solutions, is
  \begin{equation}
    \norm{\KOp_{d-k}\RadonOp_k\LOp f_\mathrm{extreme}}_\M = \sum_{n=1}^N \abs{v_n} = \norm{\vec{v}}_1.
  \end{equation}
\end{theorem}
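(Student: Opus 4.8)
The plan is to recast \cref{eq:variational-problem} as an abstract total-variation--regularized problem over $\M_\iso(\Xi_k)$ with an unpenalized finite-dimensional component, and then to combine the Banach structure of \cref{thm:right-inverse,thm:Banach-structure} with a standard representer theorem for such problems. By \cref{thm:Banach-structure}, every $f \in \M_{\LOp}^k(\R^d)$ decomposes uniquely as $f = \LOp_{\RadonOp_k}^\dagger\curly{u} + p$ with $u = \LOp_{\RadonOp_k} f \in \M_\iso(\Xi_k)$, $p = \P_{\Poly_{n_{\LOp}}(\R^d)} f \in \Poly_{n_{\LOp}}(\R^d)$, and $\norm{\KOp_{d-k}\RadonOp_k\LOp f}_\M = \norm{u}_\M$; moreover $\M_{\LOp}^k(\R^d)$ is a dual space, point evaluations are weak$^*$-continuous on it, and, by \cref{thm:right-inverse}, $f(\vec{x}_m) = \ang{g_{\dummy,\dummy}(\vec{x}_m), u} + p(\vec{x}_m)$ where the kernel slices $g_{\dummy,\dummy}(\vec{x}_m)$ lie in $C_{0,\iso}(\Xi_k)$, the predual of $\M_\iso(\Xi_k)$ by \cref{eq:Miso-Ciso-dual}. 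Hence the objective of \cref{eq:variational-problem} reads $J(f) = E(\mat{H} u + \mat{G} p) + \lambda\norm{u}_\M$, where $E(\vec{z}) = \sum_{m=1}^M \Loss(y_m, z_m)$ is convex, coercive, and lower-semicontinuous on $\R^M$, the measurement map $\mat{H}\colon u \mapsto (\ang{g_{\dummy,\dummy}(\vec{x}_m), u})_m$ is weak$^*$-to-norm continuous, and $\mat{G}\colon p \mapsto (p(\vec{x}_m))_m$ is linear on the finite-dimensional space $\Poly_{n_{\LOp}}(\R^d)$, injective precisely by the well-posedness hypothesis on the least-squares polynomial fit.

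\emph{Existence, convexity, weak$^*$-compactness.} The functional $J$ is convex, so $S$ is convex; it is weak$^*$-lower-semicontinuous, since $\norm{\dummy}_\M$ is weak$^*$-lsc as a dual norm and $E\circ(\mat{H}(\dummy)+\mat{G}(\dummy))$ is weak$^*$-lsc because $E$ is lsc, $\mat{H}$ is weak$^*$-continuous, and $\mat{G}$ acts on a finite-dimensional space. Finally $J$ is coercive: along a sequence with $J(f_j)$ bounded, $\norm{u_j}_\M$ is bounded by the regularization term, hence $\mat{H}u_j$ is bounded, hence (by coercivity of $E$) each $f_j(\vec{x}_m)$ and therefore each $\mat{G}p_j$ is bounded, hence $\norm{p_j}_{\Poly_{n_{\LOp}}}$ is bounded by injectivity of $\mat{G}$ on the finite-dimensional null space. (In the interpolation case, feasibility follows from the density of the finite-width architectures in $\M_{\LOp}^k(\R^d)$, and the same estimate gives coercivity.) Since $\M_{\LOp}^k(\R^d)$ is a dual space, its norm-bounded weak$^*$-closed sublevel sets are weak$^*$-compact by Banach--Alaoglu; thus $S$ is nonempty, convex, and weak$^*$-compact, and by the Krein--Milman theorem $S$ is the weak$^*$-closure of the convex hull of its extreme points.

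\emph{Sparsity of the extreme points.} Suppose now $\Loss(\dummy,\dummy)$ is strictly convex in its second argument (or enforces $y_m = f(\vec{x}_m)$). Then $E$ is strictly convex on $\R^M$ (resp. the indicator of a single point), so the fitted vector $\vec{z}^\star = \mat{H}u + \mat{G}p$ is the same for every $f \in S$, and in $(u,p)$-coordinates
\begin{equation}
  S = \argmin\curly*{\norm{u}_\M \st \mat{H}u + \mat{G}p = \vec{z}^\star,\ u \in \M_\iso(\Xi_k),\ p \in \Poly_{n_{\LOp}}(\R^d)}.
\end{equation}
Eliminating the unpenalized $p$ turns this into the minimization of $\norm{u}_\M$ under $\dim\Poly_{n_{\LOp}}(\R^d)$-codimensional affine constraints on $\mat{H}u$, i.e. under $M - \dim\Poly_{n_{\LOp}}(\R^d)$ scalar linear measurements of $u$ (here the injectivity of $\mat{G}$ enters). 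The standard representer theorem for $\M$-norm minimization over the dual Banach space $\M_\iso(\Xi_k) = \paren*{C_{0,\iso}(\Xi_k)}'$ under finitely many linear measurements then yields that every extreme point of $S$ is of the form $(u^\star, p^\star)$ with $u^\star = \sum_{n=1}^N v_n e_n$, $N \le M - \dim\Poly_{n_{\LOp}}(\R^d)$, $v_n \in \R\setminus\curly{0}$, and $e_n$ an extreme point of the unit ball of $\M_\iso(\Xi_k)$. By the identification of $\M_\iso(\Xi_k)$ with the space of Radon measures on the manifold of $k$-planes (following~\cite{Parhikplane}, recalled through \cref{prop:Miso-k-plane} and \cref{eq:C0-dense}), these unit-ball extreme points are exactly the signed isotropized Dirac masses $e_n = \pm\P_\iso\delta_{(\mat{A}_n,\vec{t}_n)}$, $(\mat{A}_n,\vec{t}_n) \in \Xi_k$.

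\emph{Back-translation.} Applying $\LOp_{\RadonOp_k}^\dagger$ and using that $(\mat{A},\vec{t})\mapsto g_{\mat{A},\vec{t}}(\vec{x})$ is itself isotropic (so $\ang{g_{\dummy,\dummy}(\vec{x}),\P_\iso\delta_{(\mat{A}_n,\vec{t}_n)}} = g_{\mat{A}_n,\vec{t}_n}(\vec{x})$), the extreme point of $S$ associated to $(u^\star,p^\star)$ is $f_\mathrm{extreme}(\vec{x}) = \sum_{n=1}^N v_n\, g_{\mat{A}_n,\vec{t}_n}(\vec{x}) + p^\star(\vec{x})$. Expanding each $g_{\mat{A}_n,\vec{t}_n}$ via \cref{eq:g-kernel} collects all the polynomial corrections together with $p^\star$ into a single $c \in \Poly_{n_{\LOp}}(\R^d)$, which gives exactly \cref{eq:extreme-points-rep-thm} with $\mat{A}_n \in V_{d-k}(\R^d)$, $\vec{t}_n \in \R^{d-k}$, and $\rho_{\LOp} = \FourierOp_{d-k}^{-1}\curly{1/\hat{L}_\rad(\norm{\dummy}_2)}$. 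After merging any atoms that share the same $k$-plane (which only decreases $N$ and keeps $v_n \neq 0$), the regularization cost is $\norm{\KOp_{d-k}\RadonOp_k\LOp f_\mathrm{extreme}}_\M = \norm{u^\star}_\M = \sum_{n=1}^N\abs{v_n} = \norm{\vec{v}}_1$, and it is common to all of $S$ since every solution minimizes $\norm{u}_\M$ under the fixed data constraint. I expect the main obstacle to be the sparsity step: pinning down the extreme points of the unit ball of the \emph{subspace} $\M_\iso(\Xi_k)$ — a space of measures on the quotient $k$-plane manifold rather than on $\Xi_k$ — and tracking how the unpenalized polynomial block sharpens the bound from $N \le M$ to $N \le M - \dim\Poly_{n_{\LOp}}(\R^d)$; both hinge on the structure of $\M_\iso(\Xi_k)$ and on the continuity/stability bounds for $g_{\mat{A},\vec{t}}$ from \cref{thm:right-inverse}.
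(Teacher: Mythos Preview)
Your proposal is correct and follows the same overall strategy as the paper: transfer the problem to $\M_\iso(\Xi_k) \times \Poly_{n_{\LOp}}(\R^d)$ via \cref{thm:Banach-structure}, use weak$^*$-continuity of point evaluations to get existence and compactness, identify the extreme points of the unit ball of $\M_\iso(\Xi_k)$ as isotropic Diracs, and back-translate through $\LOp_{\RadonOp_k}^\dagger$ and the kernel formula \cref{eq:g-kernel}.

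The difference is one of packaging. The paper's proof is essentially a two-line invocation of the abstract direct-sum representer theorem of~\cite{UnserDirectSums} (their Theorem~3), which already encapsulates existence, weak$^*$-compactness, the Krein--Milman description, the bound $N \le M - \dim\Poly_{n_{\LOp}}(\R^d)$, and the fact that the atoms are extreme points of the unit regularization ball; the paper then only has to identify those extreme points via \cref{prop:Dirac-iso} and compute $\LOp_{\RadonOp_k}^\dagger$ on them. You instead reprove the abstract theorem's content by hand: the explicit coercivity argument (bounding $\norm{u_j}_\M$, then $\mat{H}u_j$, then $\mat{G}p_j$, then $\norm{p_j}$ via injectivity of $\mat{G}$), the reduction under strict convexity to a fixed fitted vector $\vec{z}^\star$, and the elimination of the $p$-block to obtain the codimension count. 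Your route is more self-contained and makes visible exactly where the well-posedness hypothesis enters (injectivity of $\mat{G}$), while the paper's route is shorter but opaque unless one has~\cite{UnserDirectSums} at hand. The one place you are slightly informal is the identification of the extreme points of the unit ball of $\M_\iso(\Xi_k)$; the paper records this explicitly as \cref{prop:Dirac-iso}\cref{item:Miso-extreme} (and uses \cref{item:distinct-ell1} for the $\ell^1$ cost), which is the precise statement you would cite at that step.
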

The proof of the theorem requires the following proposition.
\begin{proposition}[{\cite[Lemma~10.2]{Parhikplane}}] \label[proposition]{prop:Dirac-iso}
The isotropic shifted Dirac impulse $\delta_\iso(\dummy - (\mat{A}_0, \vec{t}_0)) \in \M_\iso(\Xi_k)$ defined by
\begin{equation}
    \delta_\iso(\dummy - (\mat{A}_0, \vec{t}_0)) \coloneqq \P_\iso\curly{\delta(\dummy - (\mat{A}_0, \vec{t}_0)))},
\end{equation}
where $\delta(\dummy - (\mat{A}_0, \vec{t}_0)) = \delta(\dummy - \mat{A}_0) \delta(\dummy - \vec{t}_0) \in \M(\Xi_k)$ is the ``classical'' Dirac impulse on $\Xi_k$, satisfies the following properties.
\begin{enumerate}
    \item
    \label{item:k-plane-Dirac-sampling}
    Sampling: For any $\psi \in C_{0, \iso}(\Xi_k)$,
    \begin{equation}
        \ang{\delta_\iso(\dummy - (\mat{A}_0, \vec{t}_0)), \psi}_k = \psi(\mat{A}_0, \vec{t}_0).
    \end{equation}
    \item
    \label{item:iso-Dirac-iso}
    Rotation invariance: For any $\mat{U} \in \rmO_{d-k}(\R)$,
    \begin{equation}
        \delta_\iso(\dummy - (\mat{A}_0, \vec{t}_0)) = \delta_\iso(\dummy - (\mat{U}\mat{A}_0, \mat{U}\vec{t}_0)).
    \end{equation}
    \item
    \label{item:iso-Dirac-unit-norm}
    Unit norm: $\norm{\delta_\iso(\dummy - (\mat{A}_0, \vec{t}_0))}_\M = 1$. 

    \item 
    \label{item:distinct-ell1}
    Linear combination: For any set $\curly{(\mat{A}_n, \vec{t}_n)}_{n=1}^N \subset \Xi_k$ of distinct points,
    \begin{equation}
        \norm*{\sum_{n=1}^N a_n \delta_\iso(\dummy - (\mat{A}_n, \vec{t}_n))}_\M = \sum_{n=1}^N \abs{a_n} = \norm{\vec{a}}_1.
        \label{eq:distinct-ell1}
    \end{equation}
    
    \item Extreme points of  $B_{\M_\iso} \coloneqq \curly{e \in \M_\iso(\Xi_k) \st \norm{e}_{\M} \leq 1}$: If $e \in \Ext B_{\M_\iso}$, then $e = \pm \delta_\iso(\dummy - (\mat{A}_n, \vec{t}_n))$ for some $(\mat{A}_n, \vec{t}_n) \in \Xi_k$.

    \label{item:Miso-extreme}
\end{enumerate}
\end{proposition}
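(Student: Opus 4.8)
\emph{Proof plan.} The whole proposition rests on a single structural identification, which I would establish first. Write $y_0 = (\mat{A}_0, \vec{t}_0)$ and let $\Phi_{y_0} \colon \rmO_{d-k}(\R) \to \Xi_k$ be the continuous map $\mat{U} \mapsto (\mat{U}\mat{A}_0, \mat{U}\vec{t}_0)$. I claim that
\[
    \delta_\iso(\dummy - y_0) = (\Phi_{y_0})_* \sigma,
\]
i.e.\ $\delta_\iso(\dummy - y_0)$ is the image of the normalized Haar measure $\sigma$ under $\Phi_{y_0}$, equivalently the unique $\rmO_{d-k}(\R)$-invariant probability measure supported on the compact orbit $\rmO_{d-k}(\R) \cdot y_0$. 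This is immediate: for $\psi \in C_0(\Xi_k)$, self-adjointness of $\P_\iso$ gives $\ang{\P_\iso\delta(\dummy - y_0), \psi} = \ang{\delta(\dummy - y_0), \P_\iso\psi} = (\P_\iso\psi)(y_0) = \avint_{\rmO_{d-k}(\R)} \psi(\mat{U}\mat{A}_0, \mat{U}\vec{t}_0) \dd\sigma(\mat{U})$, which is exactly $\int_{\Xi_k} \psi \dd\big((\Phi_{y_0})_*\sigma\big)$; and $(\Phi_{y_0})_*\sigma$ is a \emph{bona fide} Radon probability measure because $\sigma$ is Radon and $\rmO_{d-k}(\R)$ is compact, with support equal to the compact orbit of $y_0$.

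From this, properties \cref{item:k-plane-Dirac-sampling,item:iso-Dirac-iso,item:iso-Dirac-unit-norm} follow directly. For the sampling property, an isotropic $\psi \in C_{0,\iso}(\Xi_k)$ satisfies $\P_\iso\psi = \psi$, so the computation above reduces to $\ang{\delta_\iso(\dummy - y_0), \psi}_k = (\P_\iso\psi)(y_0) = \psi(y_0)$. Rotation invariance holds because the maps $\Phi$ based at $(\mat{U}_0\mat{A}_0, \mat{U}_0\vec{t}_0)$ and at $(\mat{A}_0, \vec{t}_0)$ have the same image (the common orbit), on which the invariant probability measure is unique; equivalently, $(\P_\iso\psi)(\mat{U}_0\mat{A}_0, \mat{U}_0\vec{t}_0) = (\P_\iso\psi)(\mat{A}_0, \vec{t}_0)$ since $\P_\iso\psi$ is isotropic. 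Unit norm is clear, since a pushforward of a probability measure is a probability measure, whence $\norm{\delta_\iso(\dummy - y_0)}_\M = \big((\Phi_{y_0})_*\sigma\big)(\Xi_k) = 1$. For \cref{item:distinct-ell1}, I would note that the points $(\mat{A}_n, \vec{t}_n)$, being distinct (and, under the parameterization of~\cite{Parhikplane}, labeling distinct $k$-planes), lie in distinct $\rmO_{d-k}(\R)$-orbits --- two points of a common orbit would give the same isotropic Dirac by rotation invariance --- so the orbits $\rmO_{d-k}(\R) \cdot (\mat{A}_n, \vec{t}_n)$ are pairwise disjoint compact sets. Hence the positive measures $\delta_\iso(\dummy - (\mat{A}_n, \vec{t}_n))$ have pairwise disjoint supports, so $\norm*{\sum_n a_n\,\delta_\iso(\dummy - (\mat{A}_n, \vec{t}_n))}_\M = \sum_n \abs{a_n} = \norm{\vec{a}}_1$ by additivity of total variation over disjoint supports.

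The substantive step is \cref{item:Miso-extreme}; the plan is to realize $\M_\iso(\Xi_k)$ as an honest space of measures so the classical description of extreme points applies. Let $Y \coloneqq \Xi_k / \rmO_{d-k}(\R)$ with the quotient topology and $q \colon \Xi_k \to Y$ the canonical surjection. Since $\rmO_{d-k}(\R)$ is a compact group acting continuously and isometrically (Frobenius metric on $V_{d-k}(\R^d)$, Euclidean metric on $\R^{d-k}$) on the locally compact Hausdorff space $\Xi_k$, the space $Y$ is locally compact Hausdorff and $q$ is proper. I would then verify that pullback along $q$ defines an \emph{isometric} isomorphism $C_0(Y) \to C_{0,\iso}(\Xi_k)$, $\bar{f} \mapsto \bar{f} \circ q$: by \cref{eq:C0-dense}, $C_{0,\iso}(\Xi_k)$ is exactly the space of isotropic (hence $\rmO_{d-k}(\R)$-invariant) members of $C_0(\Xi_k)$, each of which factors uniquely through $q$, and properness of $q$ forces the factor into $C_0(Y)$ with equal sup-norm. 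Dualizing (and using \cref{eq:Miso-Ciso-dual}) yields an isometric, weak$^*$-bicontinuous isomorphism $\M(Y) = C_0(Y)' \cong C_{0,\iso}(\Xi_k)' = \M_\iso(\Xi_k)$ which, by the sampling property, carries $\delta_{q(\mat{A}, \vec{t})}$ to $\delta_\iso(\dummy - (\mat{A}, \vec{t}))$. Finally I would invoke the classical fact that $\Ext B_{\M(Y)} = \curly{\pm\delta_y \st y \in Y}$ --- a norm-one $\nu$ not of this form either changes sign or has support containing two distinct points, and in either case one splits $\nu$ into disjointly supported restrictions to exhibit it as a nontrivial convex combination of two norm-one measures --- and transport it through the isometric isomorphism, which gives exactly \cref{item:Miso-extreme}.

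The main obstacle is the topological bookkeeping in this last step: one must confirm that $q$ is proper (routine for a compact group acting on a locally compact Hausdorff space) and, crucially, that the induced identification $\M_\iso(\Xi_k) \cong \M(Y)$ is genuinely \emph{isometric} and matches isotropic Diracs with Diracs, for otherwise the extreme-point structure does not transfer. A self-contained alternative that bypasses the quotient is to argue directly in $\M_\iso(\Xi_k)$: if $e \in \Ext B_{\M_\iso}$ has support meeting two distinct orbits, separate them by disjoint \emph{invariant} open sets (available because the action is by isometries) and split $e$ via its restrictions to obtain a nontrivial convex decomposition in $B_{\M_\iso}$; otherwise $\operatorname{supp}(e)$ is a single orbit $O$, and then $\abs{e}$, $e^+$, and $e^-$ are all invariant with support contained in $O$, so by uniqueness up to scale of the invariant measure on the homogeneous space $O$, $e$ is a scalar multiple of $\delta_\iso(\dummy - y)$ for any $y \in O$ --- necessarily $\pm\delta_\iso(\dummy - y)$ by \cref{item:iso-Dirac-unit-norm}. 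Either route reduces the proposition, modulo these standard verifications, to the pushforward identification established at the outset.
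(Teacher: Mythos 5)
This proposition is cited directly from~\cite[Lemma~10.2]{Parhikplane}; the present paper offers no proof of its own, so there is nothing in the source to compare against word for word. Evaluated on its own terms, your proof is sound and organized around the right structural observation: $\delta_\iso(\dummy - y_0)$ is the pushforward of the normalized Haar measure on $\rmO_{d-k}(\R)$ under the orbit map $\mat{U} \mapsto (\mat{U}\mat{A}_0, \mat{U}\vec{t}_0)$, which is an honest Radon probability measure because $\rmO_{d-k}(\R)$ is compact. Properties (1)--(3) then fall out immediately, exactly as you say. For the extreme-point characterization (5), both your routes are correct. The quotient route does require verifying that $q \colon \Xi_k \to \Xi_k/\rmO_{d-k}(\R)$ is proper and that the induced $C_0(Y) \cong C_{0,\iso}(\Xi_k)$ is isometric and matches Diracs to isotropic Diracs; you flag these as the bookkeeping steps and they do check out for a compact group acting on a locally compact Hausdorff space. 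The direct route is cleaner: separating two orbits meeting $\operatorname{supp}(e)$ by disjoint invariant opens and splitting $e$, or, if $\operatorname{supp}(e)$ is a single orbit, applying uniqueness (up to scale) of the invariant measure on $\mathrm{O}_{d-k}(\R)/H$ together with the mutual singularity of the Hahn--Jordan parts $e^\pm$.

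One caveat deserves to be stated more bluntly than your parenthetical does. In item~(4), ``distinct points of $\Xi_k$'' cannot be taken literally: the parameterization of $k$-planes is many-to-one, and two distinct points lying in the same $\rmO_{d-k}(\R)$-orbit give \emph{identical} isotropic Diracs by item~(2), making the $\ell^1$ identity \cref{eq:distinct-ell1} false (take $a_1 = -a_2$). The hypothesis must be read as \emph{distinct orbits}, i.e.\ distinct $k$-planes; your argument that the orbits are then pairwise disjoint compact sets, on which total variation is additive, is then correct and is the right proof. You essentially identify this, but the phrasing ``being distinct (and \dots labeling distinct $k$-planes) \dots lie in distinct orbits'' reads as if distinct points automatically label distinct $k$-planes, which they do not; you should state explicitly that ``distinct'' is being interpreted at the level of orbits, that this is the reading forced by item~(2), and that it is exactly the form in which the proposition is applied in the proof of \cref{thm:representer}, where the $e_n$ are \emph{distinct measures} and hence automatically sit on distinct orbits.
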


\begin{proof}[Proof of \Cref{thm:representer}]
    The proof relies on the abstract representer theorem in~\cite{UnserDirectSums} (see also~\cite{BoyerRepresenter,BrediesSparsity,UnserUnifyingRepresenter}). From the assumptions on the loss function combined with the weak$^*$-continuity of the point evaluation functional on $\M_{\LOp}^k(\R^d)$ (\cref{item:weak*} in \cref{thm:Banach-structure}), our setting coincides with the hypotheses of~\cite[Theorem~3]{UnserDirectSums}. First, this abstract result ensures that the solution set $S$ is nonempty, convex, and weak$^*$-compact. Second, it ensures that, when the loss function is strictly convex (or if it imposes the equality $y_m = f(\vec{x}_m)$ for $m = 1, \ldots, M$), $S$ is the weak$^*$-closure of the convex hull of its extreme points, which can all be expressed as
    \begin{equation}
        f_\textrm{extreme}(\vec{x}) = c(\vec{x}) + \sum_{n=1}^{N} v_n e_n(\vec{x}),
    \end{equation}
    where the number $N$ of atoms satisfies $N \leq (M - \dim \Poly_{n_{\LOp}}(\R^d))$, $c(\dummy)$ is in the null space of the regularizer (i.e., $c \in \Poly_{n_{\LOp}}(\R^d)$), and, for $n = 1, \ldots, N$, $v_n \in \R \setminus \curly{0}$ and $e_n$ is an extreme point of the unit regularization ball
    \begin{equation}
        B \coloneqq \curly*{f \in \M_{\LOp}^k(\R^d) \st \norm{\KOp_{d-k}\RadonOp_k\LOp f}_\M \leq 1}.
    \end{equation}
    
    From \cref{item:direct-sum} in \cref{thm:Banach-structure}, we have the direct-sum decomposition of the native space as
    \begin{equation}
        \M_{\LOp}^k(\R^d) = \mathcal{V} \oplus \Poly_{n_{\LOp}}(\R^d) = \LOp_{\RadonOp_k}^\dagger\paren*{\M_\iso(\Xi_k)} \oplus \Poly_{n_{\LOp}}(\R^d).
    \end{equation}
    Since $\LOp_{\RadonOp_k}^\dagger: \M_\iso(\Xi_k) \to \V$ is an isometric isomorphism (\cref{item:iso-Miso} in \cref{thm:Banach-structure}), the extreme points of $B$ take the form $\LOp_{\RadonOp_k}^\dagger\paren*{\Ext B_{\M_\iso}}$ plus a polynomial term in $\Poly_{n_{\LOp}}(\R^d)$. From \cref{item:Miso-extreme} in \cref{prop:Dirac-iso}, it then follows that
    \begin{equation}
        e_n = \LOp_{\RadonOp_k}^\dagger\curly{\pm \delta_\iso(\dummy - (\mat{A}_n, \vec{t}_n))} + p = \pm \rho_{\LOp}(\mat{A}_n(\dummy) - \vec{t}_n) + \tilde{p},
    \end{equation}
    with $(\mat{A}_n, \vec{t}_n) \in \Xi_k$ and $p, \tilde{p} \in \Poly_{n_{\LOp}}(\R^d)$.

    A calculation in the Fourier domain reveals that $\LOp\curly{\rho_{\LOp}(\mat{A}(\dummy) - \vec{t})} = \delta(\mat{A}(\dummy) - \vec{t})$ for $(\mat{A}, \vec{t}) \in \Xi_k$. Next, we invoke the property that $\KOp_{d-k} \RadonOp_k\curly{\delta(\mat{A}(\dummy) - \vec{t})} = \delta_\iso(\dummy - (\mat{A}, \vec{t}))$~\cite[Equation~(9.16)]{Parhikplane}, which yields that
    \begin{equation}
        \KOp_{d-k}\RadonOp_k\LOp\curly{f_\mathrm{extreme}} = \sum_{n=1}^N v_n \delta_\iso(\dummy - (\mat{A}_n, \vec{t}_n)).
    \end{equation}
    Finally, by \cref{item:distinct-ell1} in \cref{prop:Dirac-iso}, we have that $\norm{\KOp_{d-k}\RadonOp_k\LOp f_\mathrm{extreme}}_\M = \sum_{n=1}^N \abs{v_n} = \norm{\vec{v}}_1$, which proves the theorem.
\end{proof}

\subsubsection{Discussion} \label{sec:discussion}
The main takeaway from \cref{thm:representer} is that \emph{sparse} neural architectures (sparse in the sense that there are fewer neurons than data) are solutions to variational problems over $\M_{\LOp}^k(\R^d)$. In particular, the regularity of functions imposed by the Banach structure of $\M_{\LOp}^k(\R^d)$ explains the variational optimality of the
architectures in \cref{eq:extreme-points-rep-thm}. Furthermore, by the isomorphism in \cref{item:native-space-bona-fide} of \cref{thm:Banach-structure}, we see that $\M_{\LOp}^k(\R^d)$ is a \emph{non-reflexive Banach space} (since $\M_\iso(\Xi_k)$ is non-reflexive), which shows that $\M_{\LOp}^k(\R^d)$ is differs in a fundamental way from a Hilbert space. Said differently, \cref{thm:representer} provides a function-space framework for neural networks that differs in a fundamental way from the (Hilbertian) framework of the neural tangent kernel~\cite{jacot}.

The two extremes of the theorem ($k = 0$ and $k = (d-1)$) capture well-studied problems. Indeed, when $k = 0$, we can take advantage of the fact that the effect of the $k$-plane transform essentially disappears. Indeed, when $k=0$ we have (see \cref{subsubsec:k=0}) that
\begin{equation}
    \norm{\KOp_{d}\RadonOp_0\LOp f}_{\M(\Xi_0)}
    = c_{d,0} \norm{(\mat{A}, \vec{t}) \mapsto \LOp\curly{f}(\mat{A}^\T\vec{t})}_{\M(\Xi_0)}
    = \norm{\LOp f}_{\M(\R^d)}.
\end{equation}
Therefore, the variational problem in \cref{eq:variational-problem} reduces to the well-studied variational problem for $\LOp$-splines~\cite{FisherJerome,UnsergTV}. Since $\LOp$ is isotropic from the admissibility assumptions (\cref{defn:admissible}), the atoms $\rho_{\LOp}$ are radial basis functions (RBFs).

For these problems, the classical theory~\cite{FisherJerome,UnsergTV} suggests that the extreme point solutions are built from atoms of the form $\rho_{\LOp}(\dummy - \vec{\tau}_n)$, $\vec{\tau}_n \in \R^d$, where $\rho_{\LOp}: \R^d \to \R$ is the (canonical) Green's function of $\LOp$ defined in the Fourier domain by $\hat{\rho}_{\LOp} = 1 / \hat{L}$. We can quickly see that \cref{thm:representer} recovers this result since the atoms take the form
\begin{equation}
    \vec{x} \mapsto \rho_{\LOp}(\mat{A}_n\vec{x} - \vec{t}_n)
    = \rho_{\LOp}(\mat{A}_n^\T\mat{A}\vec{x} - \mat{A}_n^\T\vec{t}_n)
    = \rho_{\LOp}(\vec{x} - \mat{A}_n^\T\vec{t}_n)
    = \rho_{\LOp}(\vec{x} - \vec{\tau}_n),
\end{equation}
where we made the substitution $\vec{\tau}_n = \mat{A}_n^\T\vec{t}_n \in \R^d$ in the last equality. Here,
we used the fact that $\mat{A}\mat{A}^\T = \mat{A}^\T\mat{A} = \mat{I}_d$ when $\mat{A} \in V_d(\R^d)$ (i.e., the $k=0$ Stiefel manifold is the space of $(d \times d)$ orthogonal matrices).

At the opposite extreme, when $k = (d-1)$, the atoms take the form
\begin{equation}
    \vec{x} \mapsto \rho_{\LOp}(\vec{\alpha}^\T\vec{x} - t),
\end{equation}
where $\rho_{\LOp}: \R \to \R$ is the Green's function of the univariate operator $\LOp_\rad$, specified by the frequency response $\hat{L}_\rad$ of the univariate radial profile of $\LOp$. These atoms are classical neurons with univariate nonlinearities. This problem was first studied in~\cite{ParhiShallowRepresenter} with $\LOp_\rad = \partial_t^m$, which corresponds to nonlinearities proportional to the truncated power functions $t \mapsto t_+^{m-1}$ (which is the ReLU when $m = 2$), and then generalized to other regularization operators $\LOp$ in~\cite{unser2022kernel}.

\section{Observations and Examples of Compatible Neural Architectures} \label{sec:examples}

Since \cref{thm:representer} guarantees the existence of a solution to \cref{eq:variational-problem} that takes the form in \cref{eq:extreme-points-rep-thm}, we can always find an admissible solution by solving the neural network training problem
\begin{align}
    \min_{\vec{\theta}} \quad & \paren*{\sum_{m=1}^M \Loss(y_m, f_\vec{\theta}(\vec{x}_m)) + \lambda \sum_{n=1}^N \abs{v_n}} \nonumber \\
    \subj\quad & \mat{A}_n\mat{A}_n^\T = \mat{I}_{d-k}, \: n = 1, \ldots, N,
    \label{eq:NN-problem}
\end{align}
for some fixed width $N \geq M$ with
\begin{equation}
    f_\vec{\theta}(\vec{x}) = c(\vec{x}) + \sum_{n=1}^N v_n \, \rho_{\LOp}(\mat{A}_n\vec{x} -
      \vec{t}_n), \quad \vec{x} \in \R^d, \: \vec{\theta} = \curly{v_n, \mat{A}_n, \vec{t}_n}_{n=1}^N \cup \curly{c(\dummy)}.
\end{equation}
The assumption that $N \geq M$ ensures that a solution to the variational problem in \cref{eq:variational-problem} exists in the neural network parameter space (indexed by $\vec{\theta}$) thanks to the bound in \cref{thm:representer}. This assumption implies that, as long as the neural network problem is critically parameterized or overparameterized, its solutions will also be solutions to the variational problem in \cref{thm:representer}. Thus, this result provides insight on the role of overparameterization. We also remark that the constraint on the weight matrices in \cref{eq:NN-problem} corresponds to orthogonal weight normalization. The latter has become a popular architectural choice as it has been shown to increase the stability and improve the generalization performance of neural networks~\cite{anil2019sorting,huang2018orthogonal,huang2023normalization,li2019preventing}.

The nonlinearity $\rho_{\LOp}: \R^{d-k} \to \R^{d-k}$ that appears in \cref{eq:extreme-points-rep-thm} can be viewed as the Green's function of the operator $\LOp_{d-k}: \Sch(\R^{d-k}) \to \Sch'(\R^{d-k})$ which shares the radial profile $\hat{L}_\rad$ of the $k$-plane-admissible operator $\LOp$. That is to say,
\begin{equation}
    \hat{L}_{d-k}(\vec{\omega}) = \hat{L}_\rad(\norm{\vec{\omega}}_2).
\end{equation}
Furthermore, due to the intertwining properties of the $k$-plane transform~\cite[Corollary~7.8]{Parhikplane}, it turns out that the regularization operator in \cref{eq:variational-problem} has the alternative specification
\begin{equation}
    \KOp_{d-k}\RadonOp_k\LOp = \LOp_{d-k}\KOp_{d-k}\RadonOp_k.
\end{equation}

The framework of \cref{thm:representer} encapsulates many neural architectures. The prototypical example of such an operator is the fractional Laplacian $\LOp = (-\Delta)^{\frac{\alpha}{2}}$ and, so, $\LOp_{d-k} = (-\Delta_{d-k})^{\frac{\alpha}{2}}$. The radial profile for this family of operators is
\begin{equation}
    \hat{L}_\rad(\omega) = \abs{\omega}^{\alpha}.
\end{equation}
From \cref{defn:admissible}, the reader can readily verify that this operator is $k$-plane-admissible for $\alpha > (d - k)$. This simple operator encapsulates several known results. At one extreme ($k = (d-1)$), we recover\footnote{Technically, $\rho_{(-\Delta)}(t) = \abs{t}/2$ in this case, but since $t \mapsto \abs{t}/2$ differs from the ReLU $t \mapsto t_+$ by a null space component (affine function), the ReLU and absolute value nonlinearity are treated the same in this framework.} the classical ReLU neurons by the choice $\LOp = (-\Delta)$~\cite{OngieRadon,ParhiShallowRepresenter}. At the opposite extreme ($k = 0$), from \cref{sec:discussion}, we see that we recover a sparse variant of the classical thin-plate/polyharmonic spline RBFs of Duchon~\cite{duchon1977splines} by the choice $\LOp = (-\Delta)^{\frac{\alpha}{2}}$, $\alpha > d$.

In the intermediate regime $1 \leq k \leq (d - 2)$, we can choose $\LOp_{d-k} = (-\Delta_{d-k})^{\frac{1 + (d - k)}{2}}$ so that $\rho_{\LOp}(\vec{t}) \propto \norm{\vec{t}}_2$, $\vec{t} \in \R^{d-k}$, is the norm activation function that has been used for neural architectures in~\cite{gulcehre2014learned}. These observations follow from the fact that the Green's function of the fractional Laplacian $(-\Delta_n)^\frac{\alpha}{2}$ (which acts on $n$-variables) for $\alpha > n$ takes the form
\begin{equation}
    k_{\alpha, n}(\vec{t}) = \FourierOp^{-1}\curly*{\frac{1}{\norm{\dummy}_2}}(\vec{t}) = \begin{cases}
        A_{\alpha, n} \norm{\vec{t}}_2^{\alpha - n}, & \alpha - n \not\in 2\N \\
        B_{m, n} \norm{\vec{t}}_2^{2m} \log \norm{\vec{t}}_2, & \alpha - n = 2m, m \in \N \\
        \Delta_n^{-m}\curly{\delta}, & -\alpha/2 = m, m \in \N,
    \end{cases}
\end{equation}
with $\vec{t} \in \R^n$, $A_{\alpha, n} = \frac{\Gamma((n - \alpha)/2)}{2^\alpha \pi^{n/2} \Gamma(\alpha/2)}$, and $B_{m, n} = \frac{(-1)^{1 + m}}{2^{2m + n - 1} \pi^{n/2} \Gamma(m + n/2)m!}$~\cite{GelfandV1,SamkoBook}. In general, there exist many nonlinearities that are compatible with the presented framework. All that needs to be verified is the admissibility conditions (\cref{defn:admissible}) of the underlying regularization operator.

\section{Connections to RKBS Methods and Variation Spaces} \label{sec:RKBS-variation}
After~\cite{ParhiShallowRepresenter}, a recent line of research has been trying to understand neural networks through the lens of reproducing kernel Banach spaces~\cite{BartolucciRKBS,SpekRKBS}. These works consider Banach spaces defined on, say, $\R^d$ whose members are defined as integral combinations of atoms from some continuously indexed dictionary $\Dict$. The elements of the dictionary are assumed to be continuously indexed by $\xi \in \Xi$, where $\Xi$ is assumed to be some locally compact Hausdorff space. That is, $\mathcal{D} = \curly{\varphi_\xi}_{\xi \in \Xi}$, with the additional hypothesis that $\xi \mapsto \varphi_\xi(\vec{x}) \in C_0(\Xi)$ for any $\vec{x} \in \R^d$.

It turns out that the space
\begin{equation}
    \mathcal{B}(\R^d) \coloneqq \curly*{f: \R^d \to \R \text{ is measurable} \st \text{there exists } \mu \in \M(\Xi) \text{ s.t. } f = \int_{\Xi} \varphi_\xi \dd \mu(\xi)}
    \label{eq:I-RKBS}
\end{equation}
forms a Banach space when equipped with the norm
\begin{equation}
    \norm{f}_{\mathcal{B}} \coloneqq \inf_{\mu \in \M(\Xi)}\: \norm{\mu}_\M \quad\subj\quad f = \int_{\Xi} \varphi_\xi \dd \mu(\xi).
\end{equation}
The assumptions on $\mathcal{D} = \curly{\varphi_\xi}_{\xi \in \Xi}$ ensure that the point evaluation is continuous on $\mathcal{B}(\R^d)$ (i.e., $\delta(\dummy - \vec{x}_0) \in \mathcal{B}'(\R^d)$). Such Banach spaces are referred to as \emph{reproducing kernel Banach spaces}~\cite{lin2022reproducing,zhang2009reproducing}. An RKBS formed from integral combinations of atoms from some continuously indexed dictionary is an \emph{integral RKBS} (I-RKBS)~\cite{SpekRKBS}. With this formalism,~\cite{BartolucciRKBS,SpekRKBS} study many properties of $\mathcal{B}(\R^d)$ as well as data-fitting problems over these spaces with associated representer theorems. We remark that, thanks to the assumption $\xi \mapsto \varphi_\xi(\vec{x}) \in C_0(\Xi)$ for any $\vec{x} \in \R^d$, these authors implicitly ensure that the point evaluation is actually weak$^*$-continuous on $\mathcal{B}(\R^d)$, which is stronger than standard continuity. This property is critical in proving the existence of solutions to data-fitting problems over these spaces.

Our native spaces $\M_{\LOp}^k(\R^d)$ are compatible with the I-RKBS framework. We first note that \cref{item:weak*} in \cref{thm:Banach-structure} ensures that the point evaluation is weak$^*$-continuous and, hence, continuous on $\M_{\LOp}^k(\R^d)$. Thus, $\M_{\LOp}^k(\R^d)$ is an RKBS. Next, we have the direct-sum decomposition
\begin{equation}
    \M_{\LOp}^k(\R^d) = \LOp_{\RadonOp_k}^\dagger\paren*{\M_\iso(\Xi_k)} \oplus \Poly_{n_{\LOp}}(\R^d) = \LOp_{\RadonOp_k}^\dagger\paren*{\M(\Xi_k)} \oplus \Poly_{n_{\LOp}}(\R^d),
\end{equation}
where the first equality is from \cref{item:direct-sum} in \cref{thm:Banach-structure} and the second inequality follows since the null space of $\RadonOp_k^*$ is $\paren{\M_\iso(\Xi_k)}^\comp$ (recall that $\LOp_{\RadonOp_k}^{\dagger} = (\Id - \P_{\Poly_{n_{\LOp}}(\R^d)})\LOp^{-1} \RadonOp_k^*$ and see \cref{prop:Miso-k-plane}). This immediately implies that $\M_{\LOp}^k(\R^d)$ is the direct sum of an I-RKBS with $\Poly_{n_{\LOp}}(\R^d)$, where the dictionary is comprised of the kernels $g_{\mat{A}, \vec{t}}$ from \cref{eq:g-kernel}, continuously indexed by $(\mat{A}, \vec{t}) \in \Xi_k$. This correspondence allows us to directly apply any I-RKBS developments to $\M_{\LOp}^k(\R^d)$.

The study of variation spaces to understand neural networks is a classical endeavor~\cite{kurkova2001bounds,mhaskar2004tractability}. These spaces have received renewed interest~\cite{Bach,devore2023weighted,shenouda2023vector,SXSharp,siegel2023characterization} as a means towards the understanding of the reason why neural networks seem to ``break'' the curse of dimensionality through the lens of nonlinear approximation theory. It turns out that the variation space for a dictionary $\mathcal{D}$ exactly coincides with the I-RKBS so long as the members of $\mathcal{D}$ are sufficiently regular (see~\cite[Lemma~3]{siegel2023characterization}). Indeed, in that case, the variation space for $\Dict$ is the Banach space $(\mathcal{B}(\R^d), \norm{\dummy}_\mathcal{B})$ defined in \cref{eq:I-RKBS}. Thus, $\M_{\LOp}^k(\R^d)$ can also be viewed as a variation space. The investigation of the implications of these tight connections to I-RKBS and variation spaces towards the understanding of neural architectures with multivariate nonlinearities is a direction for future work.

\appendix

\section{Proof of \Cref{thm:right-inverse}} \label[appendix]{app:right-inverse}
\begin{proof}
    We first note that $\LOp_{\RadonOp_k}$ maps $\M_{\LOp}^k(\R^d) \to \M_\iso(\Xi_k)$ by design due to the isotropic symmetry of the $k$-plane domain. The remainder of the proof is divided into three parts.

    \paragraph{Part (i): Existence/Continuity of $\LOp_{\RadonOp_k}^\dagger$ and the Right-Inverse Property \cref{eq:isometry-Miso}}
    We first note that $\LOp^*: \Sch(\R^d) \to L^1_{n_{\LOp}}(\R^d)$ is a continuous injection (\cref{item:L*-mapping} in \cref{defn:admissible}). Therefore, there exists an inverse operator $\LOp^{*-1}: \LOp^*\paren*{\Sch(\R^d)} \to \Sch(\R^d)$ such that $\LOp^{*-1}\LOp^* = \Id$ on $\Sch(\R^d)$. In particular, $\LOp^{*-1}$ is the LSI operator specified by the frequency response $\vec{\xi} \mapsto 1 / \hat{L}(\vec{\xi})$ (since $\LOp^*$ is necessarily self-adjoint from \cref{item:isotropic} in \cref{defn:admissible}).

    Next, we define the operator
    \begin{equation}
        \LOp_{\RadonOp_k}^{\dagger*} \coloneqq \RadonOp_k \LOp^{-1*} (\Id - \P_{\Poly_{n_{\LOp}}(\R^d)}^*),
        \label{eq:L-k-adj}
    \end{equation}
    where
    \begin{equation}
        \P_{\Poly_{n_{\LOp}}(\R^d)}^*\curly{f} = \sum_{\abs{\vec{n}} \leq n_{\LOp}} \ang{m_\vec{n}, f} m_\vec{n}^*,
        \label{eq:dual-proj}
    \end{equation}
    which is the projection of $f$ onto the dual space $\paren*{\Poly_{n_{\LOp}}(\R^d)}' \subset \Sch(\R^d)$. By recalling from \cref{eq:poly} that $m_\vec{n}(\vec{x}) = \vec{x}^\vec{n} / \vec{n}!$, we see that \cref{eq:dual-proj} is a well-defined operator so long as $f$ has sufficient decay (e.g., $f \in \LOp^*\paren*{\Sch(\R^d)} \subset L^1_{n_{\LOp}}(\R^d)$). 
    This reveals that
    \begin{equation}
        \LOp_{\RadonOp_k}^{\dagger*}: \LOp^*\paren[\big]{\Sch(\R^d)} \to \Sch_k.
    \end{equation}
    To check the continuity of this operator, we characterize the boundedness of the (Schwartz) kernel of $\LOp_{\RadonOp_k}^{\dagger*}$. This kernel can be formally identified with $((\mat{A}, \vec{t}), \vec{x}) \mapsto h_\vec{x}(\mat{A}, \vec{t}) \coloneqq \LOp_{\RadonOp_k}^{\dagger*}\curly{\delta(\dummy - \vec{x})}(\mat{A}, \vec{t})$. By the Fourier slice theorem,
    \begin{align}
        \reallywidehat{h_\vec{x}(\mat{A}, \dummy)}(\vec{\omega})
        &= \frac{\FourierOp\curly*{\delta(\dummy - \vec{x}) - \sum_{\abs{\vec{n}} \leq n_{\LOp}} \ang{m_\vec{n}, \delta(\dummy - \vec{x})} m_\vec{n}^*}(\mat{A}^\T\vec{\omega})}{\hat{L}_\rad(\norm{\vec{\omega}}_2)} \nonumber \\
        &= \frac{e^{-\imag \vec{\omega}^\T\mat{A}\vec{x}} - \sum_{\abs{\vec{n}} \leq n_{\LOp}} \frac{\vec{x}^\vec{n}}{\vec{n}!} \, \hat{m}_\vec{n}^*(\mat{A}^\T\vec{\omega})}{\hat{L}_\rad(\norm{\vec{\omega}}_2)} \nonumber \\
        &= \frac{e^{-\imag \vec{\omega}^\T\mat{A}\vec{x}} - \sum_{\abs{\vec{n}} \leq n_{\LOp}} \frac{\vec{x}^\vec{n}}{\vec{n}!} \, (-\imag \mat{A}^\T\vec{\omega})^\vec{n} \, \hat{\kappa}_\rad(\norm{\vec{\omega}}_2)}{\hat{L}_\rad(\norm{\vec{\omega}}_2)} \nonumber \\
        &= \frac{e^{-\imag \vec{\omega}^\T\mat{A}\vec{x}} - \sum_{n=0}^{n_{\LOp}} \frac{(-\imag\vec{\omega}^\T\mat{A}\vec{x})^n}{n!} \, \hat{\kappa}_\rad(\norm{\vec{\omega}}_2)}{\hat{L}_\rad(\norm{\vec{\omega}}_2)},\nonumber \\
        &= \frac{e^{-\imag \vec{\omega}^\T\mat{A}\vec{x}} - \hat{\kappa}_\rad(\norm{\vec{\omega}}_2) \sum_{n=0}^{n_{\LOp}} \frac{(-\imag\vec{\omega}^\T\mat{A}\vec{x})^n}{n!}}{\hat{L}_\rad(\norm{\vec{\omega}}_2)}, \label{eq:numerator}
    \end{align}
    where the penultimate line holds by the multinomial expansion. Note that the quantity in \cref{eq:numerator} is well-defined despite the pole of multiplicity $\gamma_{\LOp}$ at $\vec{\omega} = \vec{0}$. Since $\gamma_{\LOp} \in (n_{\LOp}, n_{\LOp} + 1]$, the form of the numerator ensures a proper pole-zero cancellation with the denominator. Indeed, by Taylor's theorem, when $t \in \R$ is in a neighborhood of $0$, we have that
    \begin{equation}
        e^t - \sum_{n=1}^{n_{\LOp}} \frac{t^n}{n!} = O(t^{n_{\LOp}+1}).
    \end{equation}
    By the identification of the numerator\footnote{This identification is valid by the substitution $t = -\imag \vec{\omega}^\T\mat{A}\vec{x}$ and noting that $\hat{\kappa}_\rad(\norm{\vec{\omega}}_2) = 1$ for $\norm{\vec{\omega}}_2 < R_0$, for some $R_0 \leq 1/2$~\cite[p.~6]{UnserRidges}.} in \cref{eq:numerator} with the above display combined with the property of \cref{item:zero-cancel} in \cref{defn:admissible}, we have that \cref{eq:numerator} is well-defined.
    Next, since 
    $\hat{\kappa}_\rad(\norm{\vec{\omega}}_2) \leq 1$ for $\norm{\vec{\omega}}_2 < 1$ and $\hat{\kappa}_\rad(\norm{\vec{\omega}}_2) = 0$ for $\norm{\vec{\omega}}_2 \geq 1$ (\cref{subsec:poly-proj}), on one hand we have for $\norm{\vec{\omega}}_2 < 1$ that $\abs{\reallywidehat{h_\vec{x}(\mat{A}, \dummy)}(\vec{\omega})}$ is bounded by a constant which depends on $\mat{A}$ and $\vec{x}$. To see the dependence on $\mat{A}$ and $\vec{x}$, we note that
    \begin{align*}
        &\phantom{{}={}}\sum_{n=0}^{n_{\LOp}} \frac{1}{n!} \abs{\imag\vec{\omega}^\T\mat{A}\vec{x}}^n \\
        &\leq \sum_{n=0}^{n_{\LOp}} \frac{1}{n!} \norm{\vec{\omega}}_2^n\norm{\mat{A}\vec{x}}_2^n \\
        &\leq \sum_{n=0}^{n_{\LOp}} \frac{(n_{\LOp} - n)!}{n_{\LOp}!} \frac{n_{\LOp}!}{n!(n_{\LOp} - n)!} \norm{\mat{A}\vec{x}}_2^n \\
        &\leq \sum_{n=0}^{n_{\LOp}} \frac{n_{\LOp}!}{n!(n_{\LOp} - n)!} \norm{\mat{A}\vec{x}}_2^n \\
        &= (1 + \norm{\mat{A}\vec{x}}_2)^{n_{\LOp}}. \numberthis
    \end{align*}
    Thus, there exists a universal constant $C_0 > 0$ such that
    \begin{equation}
        \abs{\reallywidehat{h_\vec{x}(\mat{A}, \dummy)}(\vec{\omega})} \leq C_0 (1 + \norm{\mat{A}\vec{x}}_2)^{n_{\LOp}}, \quad \norm{\vec{\omega}}_2 < 1.
        \label{eq:le-1}
    \end{equation}
    On the other hand, when $\norm{\vec{\omega}}_2 \geq 1$, we have that
    \begin{equation}
        \abs{\reallywidehat{h_\vec{x}(\mat{A}, \dummy)}(\vec{\omega})} \leq \frac{1}{\hat{L}_\rad(\norm{\vec{\omega}}_2)}
        \leq \frac{C_0(1 + \norm{\mat{A}\vec{x}}_2)^{n_{\LOp}}}{\hat{L}_\rad(\norm{\vec{\omega}}_2)}.
        \label{eq:ge-1}
    \end{equation}
    From \cref{item:order-high} in \cref{defn:admissible}, we have that
    \begin{equation}
        \frac{1}{\hat{L}_\rad(\norm{\vec{\omega}}_2)} \leq \frac{\norm{\vec{\omega}}_2^{-\gamma_{\LOp}'}}{C'},
        \label{eq:from-def}
    \end{equation}
    where $\gamma_{\LOp}' > (d - k)$. This property ensures that $\reallywidehat{h_\vec{x}(\mat{A}, \dummy)} \in L^1(\R^{d-k})$. Combining \cref{eq:le-1,eq:ge-1,eq:from-def}, implies that
    \begin{equation}
        (1 + \norm{\mat{A}\vec{x}}_2)^{-n_{\LOp}} \norm{\reallywidehat{h_\vec{x}(\mat{A}, \dummy)}}_{L^1} < \infty.
    \end{equation}
    Since this bound is uniform in $\vec{x} \in \R^d$ and $\mat{A} \in V_{d-k}(\R^d)$ and the inverse Fourier transform $\FourierOp_{d-k}^{-1}: L^1(\R^{d-k}) \to C_0(\R^{d-k})$ is a bounded operator (Riemann--Lebesgue lemma), we have that
    \begin{equation}
        \sup_{\substack{\vec{x} \in \R^d \\ (\mat{A}, \vec{t}) \in \Xi_k}} \abs{h_\vec{x}(\mat{A}, \vec{t})} (1 + \norm{\mat{A}\vec{x}}_2)^{-n_{\LOp}} < \infty
    \end{equation}
    with the property that $h_\vec{x} \in C_{0, \iso}(\Xi_k)$, due to the isotropic symmetry of the $k$-plane transform. Finally, if we write
    \begin{equation}
        \mat{A}\vec{x} = \begin{bmatrix}
            \vec{\alpha}_1^\T\vec{x} \\ \vdots \\ \vec{\alpha}_{d-k}^\T\vec{x}
        \end{bmatrix},
    \end{equation}
    where $\vec{\alpha}_n$ is the $n$th row of $\mat{A}$, then we have that
    \begin{equation}
        \norm{\mat{A}\vec{x}}_2 = \sqrt{\paren{\vec{\alpha}_1^\T\vec{x}}^2 + \cdots + \paren{\vec{\alpha}_{d-k}^\T\vec{x}}^2} \leq 
        \abs{\vec{\alpha}_1^\T\vec{x}} + \cdots + \abs{\vec{\alpha}_{d-k}^\T\vec{x}} \leq (d-k) \norm{\vec{x}}_2.
    \end{equation}
    This results in
    \begin{equation}
        \sup_{\substack{\vec{x} \in \R^d \\ (\mat{A}, \vec{t}) \in \Xi_k}} \abs{h_\vec{x}(\mat{A}, \vec{t})} (1 + \norm{\vec{x}}_2)^{-n_{\LOp}} < \infty.
        \label{eq:h-stability}
    \end{equation}
    This bound implies that the operator is actually well-defined on the larger space $L^1_{n_{\LOp}}(\R^d) \supset \LOp^*\paren*{\Sch(\R^d)}$. In fact, since $h_\vec{x} \in C_{0, \iso}(\Xi_k)$ we have for any $f \in L^1_{n_{\LOp}}(\R^d)$ that
    \begin{equation}
        \LOp_{\RadonOp_k}^{\dagger*}\curly{f} = \int_{\R^d} f(\vec{x}) h_\vec{x}(\dummy) \dd\vec{x} \in C_{0, \iso}(\Xi_k).
    \end{equation}
    This, combined with \cref{eq:h-stability}, ensures that the operator
    \begin{equation}
        \LOp_{\RadonOp_k}^{\dagger*}: (L^1_{n_{\LOp}}(\R^d), \norm{\dummy}_{L^1_{n_{\LOp}}}) \to (C_{0, \iso}(\Xi_k), \norm{\dummy}_{L^\infty})
    \end{equation}
    is continuous and, subsequently, that its adjoint 
    \begin{equation}
        \paren{\LOp_{\RadonOp_k}^{\dagger*}}^* = \LOp_{\RadonOp_k}^{\dagger} = (\Id - \P_{\Poly_{n_{\LOp}}(\R^d)})\LOp^{-1} \RadonOp_k^*: (\M_{\iso}(\Xi_k), \norm{\dummy}_\M) \to (L^\infty_{-n_{\LOp}}(\R^d), \norm{\dummy}_{L^\infty_{-n_{\LOp}}})
        \label{eq:right-inverse-op-defn}
    \end{equation}
    is also continuous.

    We now prove the right-inverse property \cref{eq:isometry-Miso}. Recall that $\LOp^{*-1}\LOp^* = \Id$ on $\Sch(\R^d)$. Thus, by duality, we have the identity
    \begin{equation}
        \LOp\LOp^{-1} = \Id \text{ on } \Sch'(\R^d).
    \end{equation}
    Since $\M_\iso(\Xi_k)$ continuously embeds into $\Sch_k'$ (\cref{prop:Miso-k-plane}), given $u \in \M_\iso(\Xi_k)$, we have that
    \begin{align}
        \LOp_{\RadonOp_k}\LOp_{\RadonOp_k}^\dagger\curly{u}
        &= \KOp_{d-k}\RadonOp_k\LOp (\Id - \P_{\Poly_{n_{\LOp}}(\R^d)}) \LOp^{-1} \RadonOp_k^* \curly{u} \nonumber \\
        &= \KOp_{d-k}\RadonOp_k\LOp\LOp^{-1} \RadonOp^* \curly{u} - \KOp_{d-k}\RadonOp_k\underbrace{\LOp \curly*{\P_{\Poly_{n_{\LOp}}(\R^d)}\curly*{\LOp^{-1} \RadonOp_k^* \curly{u}}}}_{=\,0} \nonumber \\
        &= \KOp_{d-k}\RadonOp_k\RadonOp_k^* \curly{u} \nonumber \\
        &= u,
    \end{align}
    where the last equality follows from \cref{prop:Miso-k-plane}.

    \paragraph{Part (ii): The Pseudo-Left-Inverse Property \cref{eq:comp-proj-inverse}}
    Observe that, for any $f \in L^\infty_{-n_{\LOp}}(\R^d)$,
    \begin{equation}
        \LOp^{-1} \LOp \curly{f} = f + p
    \end{equation}
    for some $p \in \Poly_{n_{\LOp}}(\R^d)$. Therefore,
    given $f \in \M_{\LOp}^k(\R^d) \subset L^\infty_{-n_{\LOp}}(\R^d)$, we have that
    \begin{align}
        \LOp_{\RadonOp_k}^\dagger \LOp_{\RadonOp_k} \curly{f}
        &= (\Id - \P_{\Poly_{n_{\LOp}}(\R^d)})\LOp^{-1} \RadonOp^*\KOp_{d-k}\RadonOp_k\LOp \curly{f} \nonumber \\
        &= (\Id - \P_{\Poly_{n_{\LOp}}(\R^d)})\LOp^{-1} \LOp \curly{f} \nonumber \\
        &= (\Id - \P_{\Poly_{n_{\LOp}}(\R^d)})\curly{f + p} \nonumber \\
        &= f + p - \P_{\Poly_{n_{\LOp}}(\R^d)})\curly{f} - \underbrace{\P_{\Poly_{n_{\LOp}}(\R^d)})\curly{p}}_{=\,p} \nonumber \\
        &= (\Id - \P_{\Poly_{n_{\LOp}}(\R^d)})\curly{f}.
    \end{align}

    \paragraph{Part (iii): The Form \cref{eq:g-kernel}, Stability \cref{eq:stability-bound}, and Continuity \cref{eq:kernel-C0} of the Kernel}
    From \cref{eq:right-inverse-op-defn} we immediately see that the kernel takes the form
    \begin{equation}
        g_{\mat{A}, \vec{t}}(\vec{x}) = \rho_{\LOp}(\mat{A}\vec{x} - \vec{t}) - \sum_{\abs{\vec{n}} \leq n_{\LOp}} \ang{m_\vec{n}^* , \rho_{\LOp}(\mat{A}(\dummy) - \vec{t})} m_\vec{n}(\vec{x}),
    \end{equation}
    where $\ang{m_\vec{n}^* , \rho_{\LOp}(\mat{A}(\dummy) - \vec{t})}$ is well-defined since $m_\vec{n}^* \in \Sch(\R^d)$.  Next, we note that the kernel of $\LOp_{\RadonOp_k}^\dagger$ is the ``transpose'' of the kernel of $\LOp_{\RadonOp_k}^{\dagger*}$. Consequently, we have the equality $g_{\mat{A}, \vec{t}}(\vec{x}) = h_\vec{x}(\mat{A}, \vec{t})$. Therefore, \cref{eq:h-stability} is equivalent to the stability bound
    \begin{equation}
        \sup_{\substack{\vec{x} \in \R^d \\ (\mat{A}, \vec{t}) \in \Xi_k}} \abs{g_{\mat{A}, \vec{t}}(\vec{x})} (1 + \norm{\vec{x}}_2)^{-n_{\LOp}} < \infty.
    \end{equation} 
    Finally, in Part (i) of the proof we showed that $h_\vec{x} \in C_{0, \iso}(\Xi_k)$, which proves \cref{eq:kernel-C0}.
\end{proof}

\section{Proof of \Cref{thm:Banach-structure}} \label[appendix]{app:Banach-structure}

\begin{proof}
    \hfill

    \begin{enumerate}
        \item From \cref{eq:isometry-Miso} in \cref{thm:right-inverse}, we readily deduce that
        \begin{align}
            &\LOp_{\RadonOp_k}^\dagger: \M_\iso(\Xi_k) \to \mathcal{V} \nonumber \\
            &\LOp_{\RadonOp_k}: \mathcal{V} \to \M_\iso(\Xi_k)
        \end{align}
        are continuous bijections. Therefore, if we equip $\mathcal{V}$ with the norm in \cref{eq:U-norm}, $\mathcal{V}$ is isometrically isomorphic to $\M_\iso(\Xi_k)$.

        \item To check that the sum is direct, we must verify that $\mathcal{V} \cap \Poly_{n_{\LOp}}(\R^d) = \curly{0}$. By construction, $\mathcal{V} \subset \M_{\LOp}^k(\R^d)$. Therefore, by \cref{eq:comp-proj-inverse} in \cref{thm:right-inverse},
        \begin{equation}
            \LOp_{\RadonOp_k}^\dagger \LOp_{\RadonOp_k} = \Id - \P_{\Poly_{n_{\LOp}}(\R^d)} \text{ on } \mathcal{V}.
        \end{equation}
        Yet, from \cref{item:iso-Miso},
        \begin{equation}
            \LOp_{\RadonOp_k}^\dagger \LOp_{\RadonOp_k} = \Id \text{ on } \mathcal{V}.
        \end{equation}
        Thus, $(\Id - \P_{\Poly_{n_{\LOp}}(\R^d)})\paren*{\M_{\LOp}^k(\R^d)} = \mathcal{V}$.
        Consequently, $\mathcal{V}$ and $\Poly_{n_{\LOp}}(\R^d)$ are complementary Banach subspaces of $\M_{\LOp}^k(\R^d)$ and so $\mathcal{V} \cap \Poly_{n_{\LOp}}(\R^d) = \curly{0}$.
        
        \item Since $\mathcal{V}$ and $\Poly_{n_{\LOp}}(\R^d)$ are complementary Banach subspaces of $\M_{\LOp}^k(\R^d)$, we can decompose any $f \in \M_{\LOp}^k(\R^d)$ as
        \begin{align}
            f
            &= (\Id - \P_{\Poly_{n_{\LOp}}(\R^d)})\curly{f} + \P_{\Poly_{n_{\LOp}}(\R^d)}\curly{f} \nonumber \\
            &= \LOp_{\RadonOp_k}^\dagger \LOp_{\RadonOp_k}\curly{f} + \P_{\Poly_{n_{\LOp}}(\R^d)}\curly{f} \nonumber \\
            &= \LOp_{\RadonOp_k}^\dagger\curly{u} + p,
        \end{align}
        where the second line follows from \cref{eq:comp-proj-inverse} in \cref{thm:right-inverse}. Therefore, we can equip $\M_{\LOp}^k(\R^d)$ with the composite norm
        \begin{align}
            \norm{f}_{\M_{\LOp}^k}
            &\coloneqq \norm{\LOp_{\RadonOp_k}^\dagger\curly{u}}_\mathcal{V} + \norm{p}_{\Poly_{n_{\LOp}}} \nonumber \\
            &= \norm{\LOp_{\RadonOp_k}^\dagger\LOp_{\RadonOp_k} f}_\mathcal{V} + \norm{\P_{\Poly_{n_{\LOp}}(\R^d)} f}_{\Poly_{n_{\LOp}}} \nonumber \\
            &= \norm{\underbrace{\LOp_{\RadonOp_k}\LOp_{\RadonOp_k}^\dagger}_{=\,\Id}\LOp_{\RadonOp_k} f}_\M + \norm{\P_{\Poly_{n_{\LOp}}(\R^d)} f}_{\Poly_{n_{\LOp}}} \nonumber \\
            &= \norm{\LOp_{\RadonOp_k} f}_\M + \norm{\P_{\Poly_{n_{\LOp}}(\R^d)} f}_{\Poly_{n_{\LOp}}}.
        \end{align}
        This norm is an isometric isomorphism with $\mathcal{V} \times \Poly_{n_{\LOp}}(\R^d)$ by design and thus an isometric isomorphism with $\M_\iso(\Xi_k) \times \Poly_{n_{\LOp}}(\R^d)$ by \cref{item:iso-Miso}.

        \item We first note that, in order to equip a Banach space with a weak$^*$ topology, it must be identifiable as the dual of some primary Banach space.

        Next, notice that
        \begin{equation}
            \ang{m_\vec{n}, f} = \ang{m_\vec{n}, \LOp^* \varphi} = \ang{\LOp m_\vec{n}, \varphi} = \ang{0, \varphi} = 0
        \end{equation}
        for all $f = \LOp^*\varphi \in \LOp^*\paren*{\Sch(\R^d)}$ with $\varphi \in \Sch(\R^d)$ and $\abs{\vec{n}} \leq n_{\LOp}$, where we took advantage of the null space property of $\LOp$ (\cref{rem:L-ann}). Therefore, $\P_{\Poly_{n_{\LOp}}(\R^d)}^*\curly{f} = 0$ for all $f \in \LOp^*\paren*{\Sch(\R^d)}$.
        
        This shows that the operator $\LOp_{\RadonOp_k}^*$ (which, the reader can check, maps $\Sch_k \to \LOp^*\paren*{\Sch(\R^d)}$) is such that
        \begin{align*}
           \LOp_{\RadonOp_k}^{\dagger*} \LOp_{\RadonOp_k}^*\curly{\psi}
           &= \RadonOp_k \LOp^{-1*} (\Id - \P_{\Poly_{n_{\LOp}}(\R^d)}^*) \LOp^* \underbrace{\RadonOp_k^* \KOp_{d-k}\curly{\psi}}_{\in \Sch(\R^d)} \\
           &= \RadonOp_k \LOp^{-1*} \LOp^* \RadonOp_k^* \KOp_{d-k}\curly{\psi} \\
           &= \RadonOp_k \RadonOp_k^* \KOp_{d-k}\curly{\psi} \\
           &= \psi, \numberthis
           \label{eq:left-inverse-predual}
        \end{align*}
        for all $\psi \in \Sch_k$. Thus, $\LOp_{\RadonOp_k}^{\dagger*}$ is a left-inverse of $\LOp_{\RadonOp_k}^{*}$. This implies that the normed space $(\Sch_k, \norm{\dummy}_{L^\infty})$ is (isometrically) isomorphic to the normed space $(\LOp^*\paren*{\Sch(\R^d)}, \norm{\dummy}_\mathcal{U})$ with $\norm{u}_\mathcal{U} \coloneqq \norm{\LOp_{\RadonOp_k}^{\dagger*}\curly{u}}_{L^\infty}$. Recall that $C_{0, \iso}(\Xi_k) = \cl{(\Sch_k, \norm{\dummy}_{L^\infty})}$ from \cref{eq:C0-dense} and define the Banach space $\mathcal{U} \coloneqq \cl{(\LOp^*\paren*{\Sch(\R^d)}, \norm{\dummy}_\mathcal{U})}$. We can now invoke the bounded linear transformation theorem~\cite[Theorem I.7, p. 9]{RSBook} on both $\LOp_{\RadonOp_k}^{\dagger*}$ and $\LOp_{\RadonOp_k}^*$ to find that these operators have the continuous extensions
        \begin{align*}
            &\LOp_{\RadonOp_k}^{\dagger*}: \mathcal{U} \to C_{0, \iso}(\Xi_k) \\
            &\LOp_{\RadonOp_k}^*: C_{0, \iso}(\Xi_k) \to \mathcal{U}, \numberthis \label{eq:predual-isometries}
        \end{align*}
        which establishes that $\mathcal{U}$ and $C_{0, \iso}(\Xi_k)$ are (isometrically) isomorphic Banach spaces.
        
        From \cref{item:native-space-bona-fide}, we know that $\M_{\LOp}^k(\R^d)$ is isometrically isomorphic to $\M_\iso(\Xi_k) \times \Poly_{n_{\LOp}}(\R^d)$. Since $\M_\iso(\Xi_k) = \paren*{C_{0, \iso}(\Xi_k)}'$ (see \cref{eq:Miso-Ciso-dual}) and $\Poly_{n_{\LOp}}(\R^d) = \paren*{\Poly_{n_{\LOp}}(\R^d)}''$ (since $\Poly_{n_{\LOp}}(\R^d)$ is finite-dimensional and hence reflexive), we see that there is a predual of $\M_{\LOp}^k(\R^d)$ that is isometrically isomorphic to $C_{0, \iso}(\Xi_k) \times \paren*{\Poly_{n_{\LOp}}(\R^d)}'$.
         
        From \cref{eq:predual-isometries} and \cref{item:iso-Miso}, we have the diagram in \cref{fig:diagram}.
        \begin{figure}
            \centering
            \[
                \begin{tikzcd}[row sep=3em,column sep=5em]
                \mathcal{V} \arrow[yshift=0.5em]{r}{\LOp_{\RadonOp_k}} & \M_\iso(\Xi_k) \arrow[yshift=-0.5em]{l}{\LOp_{\RadonOp_k}^\dagger} \\
                \mathcal{U}  \arrow[dashed]{u}{\text{dual}} \arrow[yshift=-0.5em, swap]{r}{\LOp_{\RadonOp_k}^{\dagger*}} & C_{0, \iso}(\Xi_k) \arrow[dashed, swap]{u}{\text{dual}} \arrow[yshift=0.5em, swap]{l}{\LOp_{\RadonOp_k}^*}
                \end{tikzcd}
            \]
            \caption{Relationships between the function spaces.}
            \label{fig:diagram}
        \end{figure}
        Therein, we see that $\mathcal{V} = \mathcal{U}'$ and so $\X = \mathcal{U} \oplus \paren*{\Poly_{n_{\LOp}}(\R^d)}'$ is such that $\X' = \M_{\LOp}^k(\R^d)$. To complete the proof, we need to establish that $\delta(\dummy - \vec{x}_0) \in \mathcal{U} \oplus \paren*{\Poly_{n_{\LOp}}(\R^d)}'$~\cite[Theorem IV.20, p. 114]{RSBook}. Clearly, $\delta(\dummy - \vec{x}_0) \in \paren*{\Poly_{n_{\LOp}}(\R^d)}'$. Therefore, we only need to check that $\delta(\dummy - \vec{x}_0) \in \mathcal{U}$.
        This is equivalent to $\LOp_{\RadonOp_k}^{\dagger*}\curly{\delta(\dummy - \vec{x}_0)} \in C_{0, \iso}(\Xi_k)$. Since $\LOp_{\RadonOp_k}^{\dagger*}\curly{\delta(\dummy - \vec{x}_0)}(\mat{A}, \vec{t}) = g_{\mat{A}, \vec{t}}(\vec{x}_0)$, the result follows from \cref{eq:kernel-C0} in \cref{thm:right-inverse}.
    \end{enumerate}
\end{proof}

\bibliographystyle{plain} 
\bibliography{ref}

\end{document}